\theoremstyle{plain}
\newtheorem{theorem}{Theorem}[section]
\newtheorem{lemma}[theorem]{Lemma}
\theoremstyle{definition}
\theoremstyle{remark}
\providecommand{\jw}{{}}
\providecommand{\csiszar}{Csisz\`{a}r}
\icmltitlerunning{Scalable Wasserstein Gradient Flow for Generative Modeling through Unbalanced Optimal Transport}
\begin{document}

\twocolumn[
\icmltitle{Scalable Wasserstein Gradient Flow for Generative Modeling through Unbalanced Optimal Transport}

% It is OKAY to include author information, even for blind
% submissions: the style file will automatically remove it for you
% unless you've provided the [accepted] option to the icml2024
% package.

% List of affiliations: The first argument should be a (short)
% identifier you will use later to specify author affiliations
% Academic affiliations should list Department, University, City, Region, Country
% Industry affiliations should list Company, City, Region, Country

% You can specify symbols, otherwise they are numbered in order.
% Ideally, you should not use this facility. Affiliations will be numbered
% in order of appearance and this is the preferred way.
\icmlsetsymbol{equal}{*}

\begin{icmlauthorlist}
\icmlauthor{Jaemoo Choi}{equal,snu}
\icmlauthor{Jaewoong Choi}{equal,kias}
\icmlauthor{Myungjoo Kang}{snu}
\end{icmlauthorlist}

\icmlaffiliation{snu}{Seoul National University}
\icmlaffiliation{kias}{Korea Institute for Advanced Study}

\icmlcorrespondingauthor{Myungjoo Kang}{mkang@snu.ac.kr}

% You may provide any keywords that you
% find helpful for describing your paper; these are used to populate
% the "keywords" metadata in the PDF but will not be shown in the document
\icmlkeywords{Machine Learning, ICML}

\vskip 0.3in
]

% this must go after the closing bracket ] following \twocolumn[ ...

% This command actually creates the footnote in the first column
% listing the affiliations and the copyright notice.
% The command takes one argument, which is text to display at the start of the footnote.
% The \icmlEqualContribution command is standard text for equal contribution.
% Remove it (just {}) if you do not need this facility.

%\printAffiliationsAndNotice{}  % leave blank if no need to mention equal contribution
\printAffiliationsAndNotice{\icmlEqualContribution} % otherwise use the standard text.

\begin{abstract}
Wasserstein Gradient Flow (WGF) describes the gradient dynamics of probability density within the Wasserstein space. WGF provides a promising approach for conducting optimization over the probability distributions. Numerically approximating the continuous WGF requires the time discretization method. The most well-known method for this is the JKO scheme. In this regard, previous WGF models employ the JKO scheme and parametrize transport map for each JKO step. However, this approach results in quadratic training complexity $O(K^2)$ with the number of JKO step $K$. This severely limits the scalability of WGF models. In this paper, we introduce a scalable WGF-based generative model, called Semi-dual JKO (S-JKO). Our model is based on the semi-dual form of the JKO step, derived from the equivalence between the JKO step and the Unbalanced Optimal Transport. Our approach reduces the training complexity to  $O(K)$. We demonstrate that our model significantly outperforms existing WGF-based generative models, achieving FID scores of 2.62 on CIFAR-10 and 5.46 on CelebA-HQ-256, which are comparable to state-of-the-art image generative models.
\end{abstract}

\section{Introduction}
% Generative models are a class of Deep Learning models that learn the underlying distribution of training data. These models fall into two categories based on their approach: (1) models that directly approximate the target distribution (Explicit Density Models) and (2) models that approximate sampling from the target distribution (Implicit Density Models). Explicit density models include Energy-based models, Diffusion models, Variational Autoencoders, and Flow models. On the other hand, Implicit density models include Generative Adversarial Networks and Optimal Transport Maps. 
Generative models are a class of Deep Learning models that learn the underlying distribution of training data. There are diverse approaches for generative modeling, such as Energy-based models \cite{ebm, implicitebm}, Diffusion models \cite{ddpm, scoresde}, Variational Autoencoders \cite{kingma2013auto}, Flow models \citep{dinh2016density, kingma2018glow}, Generative Adversarial Networks \cite{gan}, Optimal Transport Maps \cite{otm, uotm}, and Wasserstein Gradient Flows. Recently, generative models achieved impressive progress, demonstrating the ability to produce high-quality samples on high-resolution image datasets.
Despite these advancements, Wasserstein Gradient Flow models still face challenges in scalability to high-dimensional image datasets.

Wasserstein Gradient Flow (WGF) investigates the minimizing dynamics of probability density following the steepest descent direction of a given functional.  
WGF plays an important role across various areas involving optimization over probability densities, e.g.  Optimal Transport (OT) \cite{Fillippo, carlier2017convergence}, Physics \cite{carrillo2022primal,adams2011large}, Machine learning \cite{lin2021wasserstein, jkoex2}, and Sampling \cite{bernton2018langevin,frogner1806approximate,svgd,svgdchi,kale,particle}. The Jordan-Kinderlehrer-Otto (JKO) scheme is a prominent method for numerically approximating WGF \cite{jko}. The JKO scheme corresponds to the time discretization of WGF. The previous works utilized the JKO scheme and conducted optimization for every transport map at each JKO step \cite{jkoex2, wgfkorotin, jkoex1, population, vwgf}. However, this approach incurs quadratic training complexity $O(K^2)$ with the number of JKO step K. This quadratic complexity arises from the necessity to simulate the entire trajectory of the JKO scheme. This complexity significantly limited the scalability of WGF models through the prolonged training time and the limited model size for parametrization.

To overcome these challenges, we suggest a new generative algorithm by utilizing the semi-dual form of the JKO step. We refer to our model as the Semi-dual JKO (\textbf{\textit{S-JKO}}). Our model consists of two components. First, we introduce the semi-dual form of the JKO step from the equivalence between the JKO step and the Unbalanced Optimal Transport problem \cite{uot1, uot2} (Sec \ref{sec:relationship}). Second, we introduce the reparametrization trick to handle the complexity challenges of existing JKO models (Sec \ref{sec:algorithm}). Our model reduces the training complexity from quadratic to linear $O(K)$. Our model achieves significantly improved scalability compared to existing WGF-based models. Specifically, our S-JKO achieves FID scores of 2.62 on CIFAR-10 and 5.46 on CelebA-HQ, outperforming existing WGF-based methods by a significant margin and approaching state-of-the-art performance. Our contributions can be summarized as follows: 

% To overcome these challenges, we suggest a new generative algorithm by utilizing the semi-dual form of the backward JKO scheme.
% We introduce a reparametrization trick -> scalability.
% Additionally, our paper discovers the connection between UOTM and JKO schemes; One discretization JKO step of JKO schemes is the same as solving the UOT problem.

% Through this observation, we propose a new algorithm that combines the advantages of these two methods.
% We increase the performance of the JKO scheme by relying on UOTM. 
% We also suggest efficiently parametrizing the network, hence, reducing the computation burden of the training procedure.
% Interestingly, our method shows even faster convergence than UOTM.
% Finally, by this novel parametrization, we reduce the burden on sampling, NFE 1.
% Our model shows the state-of-the-art result in transport-based generative modeling, largely exceeding the performance of previous Wasserstein gradient flow models (WGFMs) and also correcting the distributional mismatch of UOTMs.
% Overall, our contribution can be summarized as follows:
% \begin{itemize}
%     \item Identifies the relationship between JKO and UOTM
%     \item Propose a novel algorithm that is efficient in training time, and sampling time compared to prior WGFMs.
%     \item Our algorithm corrects the distributional mismatch of UOTMs, showing better convergence properties than UOTMs.
%     \item Performance very good.
% \end{itemize}

\begin{itemize}
    \item We propose a WGF-based generative model based on the semi-dual form of the JKO step.
    \item We show that the JKO step is equivalent to the Unbalanced Optimal Transport problem. This insight leads to the semi-dual form of the JKO step.
    \item Our model greatly improves the scalability of WGF models until high-dimensional image datasets. To the best of our knowledge, S-JKO is the first JKO-based generative model that presents decent performance on CelebA-HQ ($256 \times 256$).
    \item To the best of our knowledge, S-JKO is the first JKO-based generative model that achieves near state-of-the-art performance on real-world image datasets. 
\end{itemize}

\paragraph{Notations and Assumptions}
% Let $\mathcal{P}(\mathbb{R}^d)$ denote a collection of probability distributions on $\mathbb{R}^d$ that are absolutely continuous with respect to the Lebesgue measure.
% Throughout this paper, \textbf{we set $\mu=\rho_0$ and $\nu$ as the source and target distributions}, respectively. 
% In particular, $\mu$ and $\nu$ correspond to $d$-dimensional Gaussian distribution and data distribution on $\mathbb{R}^d$.
% For a measurable map $T$, $T_\# \mu$ represents the pushforward distribution of $\mu$. 
% For convenience, we set $c_h (x,y) := \frac{1}{2h} \lVert x - y \rVert_2^2$.
% Moreover, the 2-Wasserstein distance is defined as follows:
Let $\mathcal{P}(\mathbb{R}^d)$ be the set of probability distributions on $\mathbb{R}^d$ that are absolutely continuous with respect to the Lebesgue measure.
Throughout this paper, \textbf{we denote the source distribution as $\boldsymbol{\mu=\rho_0}$ and denote the target distributions as $\boldsymbol{\nu}$.}
Since our scope is on generative modeling, $\mu$ and $\nu$ correspond to the $d$-dimensional Gaussian distribution and the data distribution on $\mathbb{R}^d$, respectively. For a measurable map $T$, $T_\# \mu$ represents the pushforward distribution of $\mu$. For convenience, we set $c_h (x,y) := \frac{1}{2h} \lVert x - y \rVert_2^2$.
Moreover, the 2-Wasserstein distance $\mathcal{W}_2 (\cdot, \cdot)$ is defined as follows:
\begin{equation} \label{eq:2-was}
    \mathcal{W}_2 (\rho, \xi) := \left(\min_{\pi \in \Pi(\rho, \xi)} \int_{\mathbb{R}^d \times \mathbb{R}^d} \lVert x-y \rVert_2^2 d\pi(x,y)\right)^{\frac{1}{2}},
\end{equation}
where $\Pi(\rho, \xi)$ denotes the set of joint probability distributions on $\mathbb{R}^d\times\mathbb{R}^d$ whose marginals are $\rho$ and $\xi$. 
Moreover, $f^*$ indicates the convex conjugate of a function $f$,
\jw{i.e., $f^{*}(y) = \sup_{x \in \mathbb{R}}\{\langle x, y \rangle - f(x)\}$ for $f:\mathbb{R}\rightarrow [-\infty, \infty]$.}
% Moreover, $\Pi(\rho)$ denotes the set of joint probability density where the left marginal is $\rho$.

\section{Background} \label{sec:background}

\begin{figure*}[t]
    \captionsetup[subfigure]{aboveskip=10pt,belowskip=0pt}
    \begin{center}
        \begin{subfigure}[b]{0.7\textwidth}
            \includegraphics[width=\textwidth]{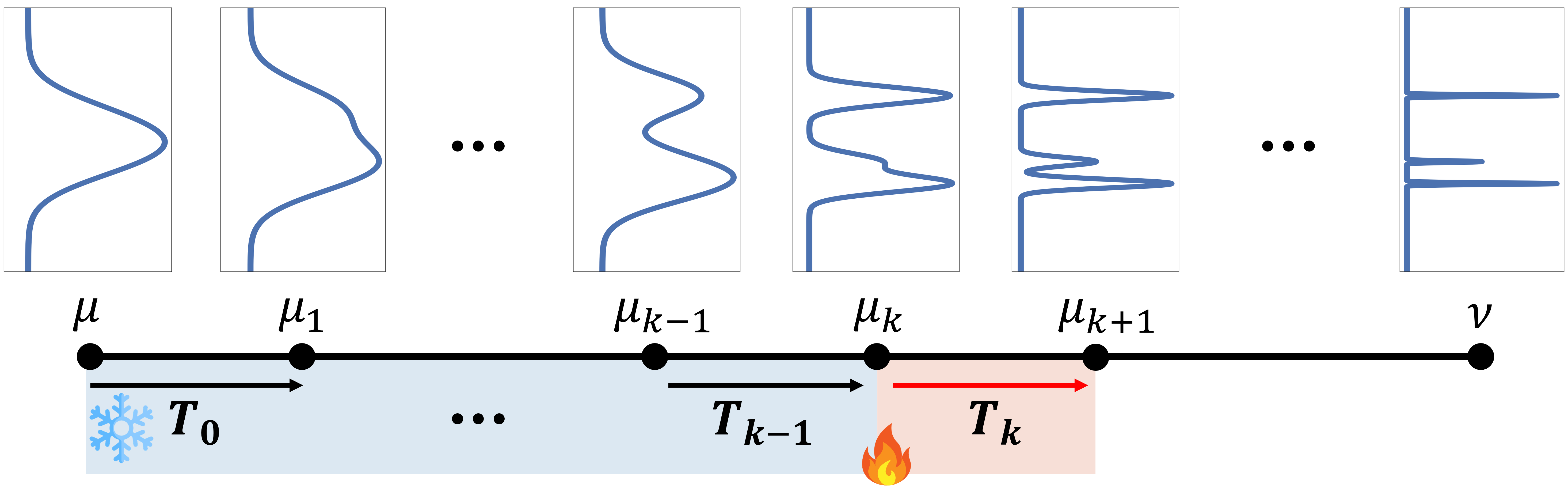}
            \caption{Training Process of Existing JKO Models}
            \label{fig:comparison_jko}
        \end{subfigure}
        \hfill
        \begin{subfigure}[b]{0.25\textwidth}        
            \includegraphics[width=\textwidth]{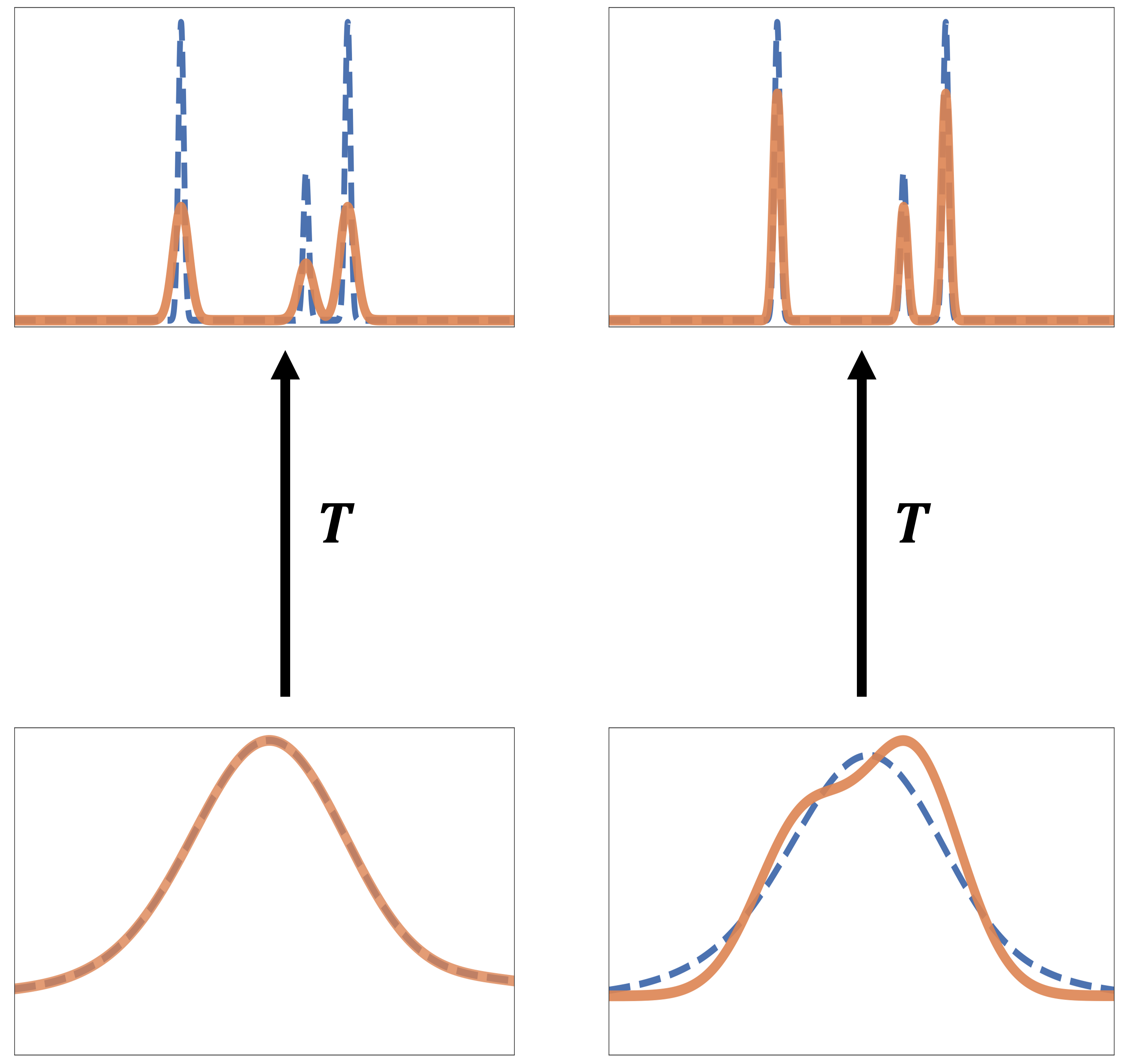}
            \caption{UOTM variants}
            \label{fig:comparison_uotm}
        \end{subfigure}
        \vspace{-5pt}
        \caption{\textbf{(a) 
        Visualization of the Training Process of Existing JKO Models.} 
        For each training iteration, sampling from $\mu_k$ involves sequential \jw{inference} through $k$-networks, i.e., $T_k \circ \dots \circ T_0(x)$ with $x\sim \mu$. This iterative network evaluation considerably slows down the training process. Formally, the training complexity becomes $O(K^2)$ where $K$ denotes the number of JKO steps.
        % For each training iteration, it is required to compute $T_k \circ \dots \circ T_0(x)$ where $x\sim \mu$ to obtain a sample from $\mu_k$. Such recursive network evaluations considerably slow down the training process.       The complexity of training time becomes $O(K^2)$ where $K$ is the number of JKO steps.
        \textbf{(b) Two Variants of the UOTMs.} \textbf{Left:} Source-fixed UOTM. \textbf{Right:} Both-relaxed-UOTM. For brevity, we simply call the Both-relaxed-UOTM as UOTM. UOTMs allow flexibility in marginal densities and therefore have inherent distribution errors (\textbf{Blue:} Source and Target distributions $\mu, \nu$. \textbf{Orange:} Marginal distributions of Optimal Coupling $\pi_0, \pi_1$.)}
        % \textbf{(b) Two variants of the UOTMs}; One is the Source-fixed UOTM (left), and the other is the both-relaxed UOTM (right). For convenience, we simply call the both-relaxed UOTM as UOTM. UOTMs relax their marginal densities, so there are inherent distribution errors (described as an orange line). }
    \end{center}
    \vspace{-15pt}
\end{figure*}

\subsection{Wasserstein Gradient Flow and JKO scheme}
\paragraph{Wasserstein Gradient Flow} 
Given a functional $\mathcal{F}(\rho)$ on $\rho\in \mathcal{P}(\mathbb{R}^d)$, the Wasserstein Gradient Flow (WGF) \cite{ambrosio} describes the dynamics of probability density $\{\rho_{t}\}_{t \geq 0}$, following the steepest descent direction of $\mathcal{F}(\rho)$. Here, the metric on $\mathcal{P}(\mathbb{R}^d)$ is defined as the 2-Wasserstein distance $\mathcal{W}_2$ (Eq \ref{eq:2-was}). 
The WGF can be explicitly written by the PDE as follows:
\begin{equation} \label{eq:WGF}
    \frac{\partial \rho}{\partial t} = \nabla \cdot \left( \rho \nabla \frac{\delta \mathcal{F}}{\delta \rho} \right),
\end{equation}
where $\frac{\delta\mathcal{F}}{\delta \rho} $ denotes the first variation of $\mathcal{F}$ with respect to standard $L_2$ metric \cite{villani}.

When $\mathcal{F}(\rho)$ is given as the $f$-divergence $D_f$ with respect to the target distribution $\nu$, \textbf{WGF describes the trajectories of probability density $\{\rho_{t}\}_{t \geq 0}$ evolving from $\mu=\rho_{0}$ towards $\nu$} by minimizing $\mathcal{F}(\rho)$:
\begin{equation} \label{eq:f-div}
    \mathcal{F}(\rho) := D_f (\rho | \nu) = \int f \left( \frac{d \rho}{d \nu} \right) d \nu.
\end{equation}
Specifically, when utilizing the KL divergence as the functional $\mathcal{F}(\rho) := D_{KL}(\rho | \nu)$, Eq \ref{eq:WGF} becomes the \textbf{Fokker-Plank equation} \jw{with the score $\nabla \log \nu$} \cite{jko}:
%with the potential $V$ \cite{jko}:
\begin{equation} \label{eq:FPK}
    \frac{\partial \rho}{\partial t} = \nabla\cdot \left( \rho \nabla \log \nu \right) + \Delta \rho, \quad \rho(0,\cdot) = \rho_0,    
\end{equation}
%where $\nu \propto e^{-V}$. 
Then, the solution $\rho_t$ converges to $\nu$ as $t\rightarrow \infty$.

\paragraph{JKO scheme}
Computing the continuous WGF is a challenging problem. To address this, \citet{jko} proposed a time discretization scheme to approximate WGF, called the \textbf{JKO scheme} (Fig \ref{fig:comparison_jko}). In this scheme, when given the current JKO step $\mu_{k}$, the next JKO step $\mu_{k+1}$ is formally defined as follows:
\begin{equation} \label{eq:jko}
    \mu_{k+1} = \underset{\rho \in \mathcal{P}(\mathbb{R}^d)}{\text{argmin}} \left[ \frac{1}{2h} \mathcal{W}_2^2 (\rho, \, \mu_{k}) + \mathcal{F}(\rho) \right].
\end{equation}
where $\mu_{0}=\mu$ is the initial condition. Intuitively, $h$ can be understood as the step size of time discretization. When the functional is set to the KL divergence $\mathcal{F}(\rho) = D_{KL}(\rho | \nu)$, the JKO scheme converges to the solution of the Fokker-Plank equation. In other words, $\{ \mu_{k} \}$ converges to $\{ \rho_{kh} \}$ in Eq \ref{eq:FPK} as the step size $h\rightarrow 0$. 

\paragraph{JKO-based Models}
In this paragraph, we provide a brief summary of previous works based on \jw{the JKO scheme \cite{wgfkorotin, jkoex1, population, vwgf, vidal2023taming, park2023deep, lee2023deep, nfjko2, riesz}}. The primary challenge in implementing the JKO scheme lies in optimizing over the probability distributions $\rho \in \mathcal{P}(\mathbb{R}^d)$. The previous works addressed this challenge by transforming it into an optimization over the transport map $T$ from $\mu_{k}$ to $\mu_{k+1}$, i.e., ${T}_\# \mu_k = \mu_{k+1}$.
Note that
\begin{equation} \label{eq:monge}
   \mathcal{W}_2^2 (\mu_k, \mu_{k+1}) = \min_{{T}_\# \mu_k = \mu_{k+1}} \int_{\mathbb{R}^d} \lVert x-T(x) \rVert_2^2 d\mu_k(x),
\end{equation}
where $T$ is a measurable map.
The transport map $T$ that minimizes Eq \ref{eq:monge} is referred to as the optimal transport map from $\mu_k$ to $\mu_{k+1}$.
Using this fact, \citet{vwgf} reparametrizes the JKO step (Eq \ref{eq:jko}) as follows:
\begin{align}
    \begin{split}
        &\mu_{k+1} = {T_{k}}_\# \mu_k, \\
        &T_k = \underset{T}{\text{argmin}} \ \frac{1}{2h} \int_{\mathbb{R}^d} \lVert x - T(x) \rVert_2^2 d\mu_k(x) + \mathcal{F}\left( T_\# \mu_k \right).
    \end{split}
\end{align}
% Furthermore, by Brenier's theorem, there exists a convex function $\psi$ such that the optimal transport map $T_k$ is a gradient of $\psi$.
Moreover, Brenier's theorem states that there exists a convex function $\psi$ such that the optimal transport map $T_k$ is a gradient of $\psi$, i.e., $T_k=\nabla \psi$. 
Leveraging this fact, several works \cite{wgfkorotin,jkoex1, population} parameterizes $T$ as the gradient of input convex neural network (ICNN) \cite{icnn}.
\jw{However, these JKO-based models suffered from the quadratic complexity $O(K^2)$, where $K$ denotes the number of JKO steps. In this regard, \citet{SWGF} suggested the Sliced-Wasserstein Gradient Flow to mitigate this complexity to $O(K)$.}

\begin{figure*}[t]
    \centering
        \includegraphics[width=.7\textwidth]{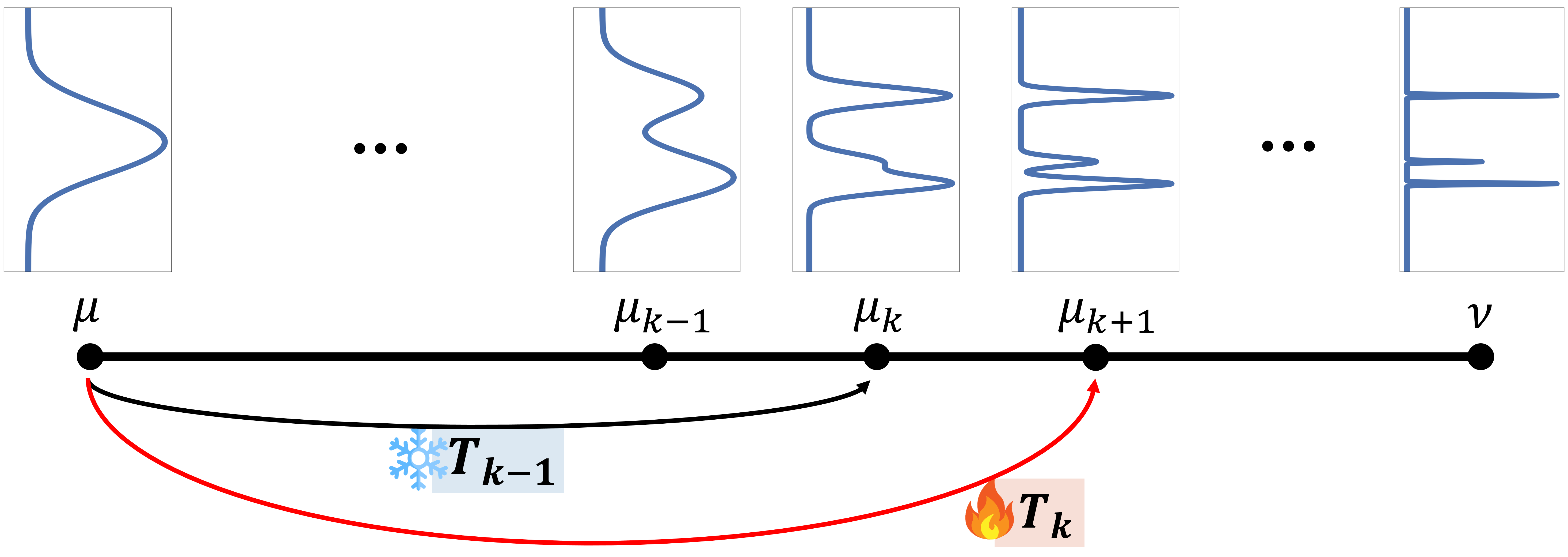}
        \vspace{-5pt}
        \caption{\textbf{Conceptual Diagram of Our Model.} 
        During the training $k$-th JKO step in our model, sampling from $\mu_k$ involves only one network inference $T_{k-1}$, i.e., $\mu_k = {T_{k-1}}_\# \mu_0$. This reparametrization strategy significantly reduces the overall training time. Formally, the training time complexity reduces to $O(K)$ from the $O(K^2)$ of other JKO models. Moreover, by initializing the parameters of $T_k$ with $T_{k-1}$, we can further decrease the number of iterations required for training. 
        % During training $k$th JKO step, our model generates samples from $\mu_k$ within a single network evaluation of $T_{k-1}$, i.e. $\mu_k = {T_{k-1}}_\# \mu_0$. This reparametrization strategy significantly reduces overall training time compared to other JKO schemes because they generate samples of $\mu_k$ by recursive network evaluations (Fig \ref{fig:comparison_jko}). Formally, the training time complexity reduces to $O(K)$ where $K$ is the number of JKO steps. Additionally, by initializing parameters of $T_k$ by $T_{k-1}$, we can further reduce the number of iterations required for the training. 
        % Moreover, since our approach does not involve repeated network evaluations within each iteration, we can comfortably utilize a large network architecture without encountering any concerns related to training time and memory usage.
        % Moreover, due to the single-step evaluation, we can comfortably utilize a large network architecture without encountering any concerns related to training time and memory usage.
        \label{fig:concept_ours}
        \vspace{-23pt}
}
    \hfill
    % \begin{minipage}{0.3\textwidth}
    %     \centering
    %     \caption{Comparison of various JKO schemes on CIFAR10. Time and Complexity are a wall-clock training time and complexity of training time with respect to $K$, $d$, and $T$. Here, $T$ is the number of operations required to evaluate the network $T_\theta$. NFE is the number of function evaluations required to generate a sample. Note that training time is measured on 1 GPU (RTX 3090 Ti).}
    %     \vspace{10pt}
    %     \scalebox{0.85}{
    %     \begin{tabular}{ccccc}
    %         \toprule
    %         Model   &  Time & Complexity &  NFE & FID \\
    %         \midrule
    %         \citet{wgfkorotin}  &  - & $O(K^2d^3 T)$ & - & -  \\
    %         \citet{vwgf} &     $\geq$ 50h & $O(K^2 T)$ & 160 & 23.1 \\
    %         \citet{nfjko}  & $\geq$ 30h  & $O(K^2 T)$ & $\geq$ 30 & 29.1  \\
    %         \midrule
    %         Ours (\textit{small})  & 6h & $O(K T)$ & 1 & 8.78 \\
    %         Ours (\textit{large})  & 50h & $O(K T)$ & 1 & 2.65 \\
    %         \bottomrule
    %     \end{tabular}}
    % \end{minipage}
\end{figure*}

\subsection{Optimal Transport-based Generative Modeling} \label{sec:background_uot}
\paragraph{Unbalanced Optimal Transport (UOT)}
The classical OT problem investigates the cost-minimizing transport map that satisfies an exact matching between two distributions \cite{villani}.
Recently, a new variation of the OT problem has
been introduced, which is called \textbf{Unbalanced Optimal Transport (UOT)} \cite{uot1, uot2}. Formally, the UOT problem between the source distribution $\mu$ and the target distribution $\nu$ is defined as follows:
\begin{equation} \label{eq:uot}
    \inf_{\pi \in \mathcal{M}_+} \int_{\mathcal{X}\times \mathcal{Y}} c(x,y) d\pi(x,y)  + D_{\varphi_1}(\pi_0|\mu) + D_{\varphi_2}(\pi_1|\nu),
\end{equation}
where $\mathcal{M}_+$ denotes a set of positive Radon measures on $\mathcal{X}\times \mathcal{Y} = \mathbb{R}^d\times \mathbb{R}^d$ and $c(\cdot, \cdot)$ represents the transportation cost. $D_{\varphi_1}$, $D_{\varphi_2}$ are two $f$-divergence\footnote{In the general case, $D_{\varphi}$ is defined as the \csiszar{} divergence \cite{csiszar} $D_{\varphi}^{c}$ for the UOT problem. Note that when $\mu$ is absolutely continuous with respect to $\nu$, the $f$-divergence and the \csiszar{} divergence are equivalent, i.e., $D_{\varphi}^{c}(\mu | \nu) = D_{\varphi}(\mu | \nu)$.} terms that penalize the dissimilarity between the marginal distributions $\pi_{0}, \pi_{1}$ and $\mu,\nu$, respectively. \textbf{Note that, in the UOT problem, both marginals of $\pi$ are not explicitly fixed.} Instead, these marginals are softly regularized by the divergence terms. This flexibility in the marginals provides outlier robustness \cite{robust-ot}.

Furthermore, the UOT problem is a generalization of the classical OT problem. When we choose $\varphi_1$ or $\varphi_2$ to be a convex indicator function $\iota(x) = \begin{cases} 0 & \text{if } x=1, \\  \infty & \text{otherwise} \end{cases}$,
% % at the set $\{1\}$, 
% \begin{equation} \label{eq:indicator}
%     \iota(x) = 
%     \begin{cases} 
% 		0 & \text{if } x=0. \\ 
%         \infty & \text{otherwise. }
%      \end{cases}    
% \end{equation}
then $D_{\varphi}$ takes the following form:
\begin{equation}
    D_{\iota}(\pi_{i}|\rho) = 
    \begin{cases} 
		0 & \text{if } \pi_{i} = \rho \,\,\text{ almost-surely.} \\ 
        \infty & \text{otherwise. }
     \end{cases}    
\end{equation}
Therefore, setting $\varphi_1=\iota$ or $\varphi_2=\iota$ means fixing the source distribution, i.e., $\pi_{0}=\mu$, or the target distribution, i.e., $\pi_{1}=\nu$. When we fix both distributions, the UOT problem is simplified to the OT problem. 
Throughout this paper, we refer to the UOT problem when $\varphi_1=\iota$ as the \textbf{Source-fixed UOT problem}.
% \jw{Throughout this paper, we call the problem is \textbf{Source-fixed UOT problem} when $\varphi_1=\iota$ in Eq \ref{eq:uot}.
This problem serves an important role in our work in Sec \ref{sec:method}.
% The \textbf{Source-fixed variant of the UOT problem}, where $\pi_{0}=\mu$, 

\paragraph{UOT-based generative models}
Recently, \citet{uotm} proposed a class of generative models by leveraging the semi-dual form of the UOT problem.
Formally, the semi-dual form of Eq \ref{eq:uot} is defined as follows:
\begin{multline} \label{eq:OTGAN}
    \sup_{v \in \mathcal{C}} \int \varphi_1^\circ \left(\inf_T \left[ c\left(x, T(x)\right) - v(T(x)) \right]\right) d\mu(x) \\ 
    + \int \varphi_2^\circ (v(y)) d\nu(y),
\end{multline}
where $\mathcal{C}$ denotes a set of continuous functions over $\mathbb{R}^d$ and $\varphi_i^\circ(x) := - \varphi_i^* (-x)$.
By parametrizing $v=v_\phi$ and $T=T_\theta$ in Eq \ref{eq:OTGAN} by neural networks, \citet{uotm} suggested the max-min adversarial learning objective, called UOTM. In this framework, $T_\theta$ represents the (unbalanced) transport map from $\mu$ to $\nu$, and $v_{\phi}$ serves as the potential \jw{function} for discriminating between  $T_{\# }\mu$ and $\nu$. 
\citet{uotmsd} demonstrated that the flexibility in distribution matching enhances the stability of the training process. However, \textbf{this flexibility also introduces inherent distribution errors to UOTM} \cite{uotm}.
Throughout this paper, we call the UOTM variant corresponding to the Source-fixed UOT problem as the \textbf{Source-fixed-UOTM}. Specifically, this is equivalent to choosing $D_{\varphi_1}=D_{\iota}$ and, thereby, $\varphi_i^\circ(x)=x$. On the contrary, when both $\varphi_1$ and $\varphi_2$ in Eq \ref{eq:uot} are not convex indicators, we denote it as \textbf{UOTM} or \textbf{Both-relaxed-UOTM} (Fig \ref{fig:comparison_uotm}).

\begin{table}[t]
    \centering
    \caption{
    \textbf{Scalability Comparison for Various JKO Schemes on CIFAR-10.} Time denotes a wall-clock training time. Complexity indicates the training complexity with respect to $K$ and $d$. \jw{Here, we only consider the complexity of algorithms, not the complexity of backbone network inference.} NFE refers to the number of function evaluations required to produce a sample. Note that training time is measured on 1 GPU (RTX 3090 Ti).
    % Comparison of various JKO schemes on CIFAR10. Time is a wall-clock training time. Complexity is the complexity of training time with respect to $K$, $d$ and $T$. Here, $T$ is the number of operations required to evaluate the network $T_\theta$. NFE is the number of function evaluations required to generate a sample. Note that training time is measured on 1 GPU (RTX 3090 Ti).
        }
    \vspace{10pt}
    \scalebox{0.72}{
        \begin{tabular}{ccccc}
            \toprule
            Model   &  Time  & Complexity &  NFE ($\downarrow$) & FID ($\downarrow$) \\
            \midrule
            \citet{wgfkorotin}  &  - & $O(K^2d^3 )$ & - & -  \\
            \citet{vwgf} &     $\geq$ 50h & $O(K^2 )$ & 160 & 23.1 \\
            \citet{nfjko}  & $\geq$ 30h  & $O(K^2 )$ & $\geq$ 150 & 29.1  \\
            \midrule
            Source-fixed-UOTM (\textit{Small}) & \textbf{6h} & $O(1)$ & 1 & 14.4 \\
            Ours (\textit{Small})  & \textbf{6h} & $O(K )$ & 1 & \textbf{8.78} \\
            \midrule
            Source-fixed-UOTM  (\textit{Large}) & 50h & $O(1)$ & 1 & 7.53 \\            
            Ours (\textit{Large})  & 50h & $O(K )$ & 1 & \textbf{2.65} \\
            \bottomrule
        \end{tabular}}
    \label{tab:JKO-comparison}
    \vspace{-10pt}
\end{table}

\section{Limited Scalability of WGF Models} \label{sec:limWGF}
% In this section, we investigate the limitations of two approaches for connecting two probability distributions in generative modeling: Scalability for Wasserstein Gradient Flow (WGF) models and Distribution Error for Unbalanced Optimal Transport Map (UOTM). In Sec \ref{sec:method}, we will introduce \textbf{a scalable WGF-based generative model that effectively addresses both challenges by exploiting the equivalence between the two approaches}.

% \paragraph{Training and Evaluation Burden of WGFMs}
% \subsection{Limited Scalability of WGFs} \label{sec:limWGF}
\paragraph{Quadratic Complexity of JKO models}
The primary challenge for WGF models lies in their limited scalability when dealing with \jw{complex} high-dimensional image datasets.
This limitation stems from the iterative multi-step approximation of intermediate distributions $\rho_{t}$ in WGF. 
\textbf{This iterative approximation considerably slows down the training process quadratically, i.e., $O(K^2)$, with respect to the total number of approximation steps $K$} (Table \ref{tab:JKO-comparison}). 
Consequently, the scalability of WGF models is significantly constrained.
Specifically, as described in Sec \ref{sec:background}, \jw{most WGF models employ the JKO scheme to numerically approximate WGF \cite{wgfkorotin, jkoex1, population, vwgf, vidal2023taming, park2023deep, lee2023deep, nfjko2, riesz}.}
%Note that the JKO scheme is the time discretization of WGF.
Hence, these models involve the iterative estimation of $(k+1)$-th distribution $\mu_{k+1}$, based on the $k$-th distribution $\mu_{k}$ (Eq \ref{eq:jko}). \textbf{Each estimation requires a neural network training for learning each transport map $T_{k}$ from $\mu_{k}$ to $\mu_{k+1}$}, i.e., $(T_{k})_{\#} \mu_{k} = \mu_{k+1}$ (Fig \ref{fig:comparison_jko}). Note that, for each $T_{k}$,\textbf{ the source data sampling from $\mu_{k}$ requires inference from all $\{T_{i}\}_{0 \leq i \leq k-1}$ with $x \sim \mu$}:
\begin{equation} \label{eq:jko-simul}
    x_{k} \sim \rho_{k} \quad  \Leftrightarrow \quad  \left( T_{k-1}\circ T_{k-2} \circ \dots \circ T_0\right) (x).
    % x_{i+1} = T_{i}(x_{i}) \,\text{ for }\, 0 \leq i \leq k-1.
\end{equation}

\paragraph{Comparison to Ours}
Table \ref{tab:JKO-comparison} presents a comparison of the scalability (in terms of training time and complexity) of various JKO models on CIFAR-10. We compared the complexity of previous works, our JKO-based model, and the UOTM-based counterpart of our model (Source-fixed-UOTM).
Note that the \textit{Small} backbone network for our model presents a comparable size to previous JKO models (See the Appendix \ref{appen:Implementation} for details). 
Therefore, \textbf{in terms of scalability, this section focuses on the comparison between the prior JKO models and our models with the \textit{Small} backbone}.\footnote{The experimental results using a \textit{Large} backbone demonstrate that our model is scalable to the competitive backbone network (NCSN++), which is widely employed by state-of-the-art generative models, and can provide comparable performance with it.  A more comprehensive discussion will be provided in Sec \ref{sec:exp}.}

The prior JKO models typically utilized $K \geq 150$ JKO steps for approximating WGF on CIFAR-10 \cite{vwgf, nfjko}, and $K=50 \sim 150$ JKO steps on low-dimensional ($\sim 100$ dimensions) datasets \cite{vwgf, wgfkorotin}. In other words, \textit{each WGF model consisted of 50-150 small neural networks, with each neural network dedicated to approximating $T_{k}$}. In this respect, the quadratic training complexity $O(K^{2})$ considerably limited the scalability of WGF models, by restricting the \jw{size} of each neural network. As a result, when extended to high-dimensional image datasets of CIFAR-10, WFG models are typically adapted to op3erate on the latent space of encoder-decoder architectures \cite{vwgf, nfjko}. Nevertheless, these models suffer from long training time ($\geq 30h$) and non-competitive generation results of FID score ($\geq 20$) (Table \ref{tab:JKO-comparison}).
In this paper, we significantly improve the scalability of WGF models by discovering that the JKO step can be interpreted as the Unbalanced Optimal Transport problem (Sec \ref{sec:relationship}). 
Compared to existing WFG models of similar size, our model outperforms them with a lower FID score of $8.78$ on CIFAR-10, while requiring a much less training time of 6 hours.

\section{Method} \label{sec:method}
In this section, we propose a novel WGF-based generative model, called the Semi-dual JKO scheme (\textbf{\textit{S-JKO}}). Our model is based on the equivalence between the JKO step and the Unbalanced Optimal Transport problem (Sec \ref{sec:relationship}).  Building upon this insight, we introduce a generative model based on the semi-dual form of the JKO step (Sec \ref{sec:algorithm}).

% \subsection{Proposed Method}
\subsection{Equivalence between JKO step and UOT problem} \label{sec:relationship}
In this subsection, we establish the equivalence between the JKO step (Eq \ref{eq:jko}) and the Source-fixed variant of the Unbalanced Optimal Transport problem (Eq \ref{eq:uot}).  
Here, we begin with the JKO step.
Let $\mu = \mu_0$ and $\nu$ denote the source and target distributions, respectively. \jw{As a reminder, our primary focus} is generative modeling. Hence, $\mu$ represents the prior distribution (Gaussian), and $\nu$ corresponds to the target data distribution. We define the energy functional $F(\rho)$ associated with the JKO step as $\mathcal{F}(\rho) = D_f \left( \rho | \nu \right)$. Then, the JKO step (Eq \ref{eq:jko}) can be expressed as follows:
\begin{equation} \label{eq:jko-again}
    \mu_{k+1} = \underset{\rho \in \mathcal{P}(\mathbb{R}^d)}{\text{argmin}} \underbrace{\frac{1}{2h} \mathcal{W}_2^2 (\mu_{k}, \rho) + D_f (\rho | \nu)}_{\mathcal{L}_{\rho}}.
\end{equation}
If we expand  $\mathcal{W}_2^2 (\mu_{k}, \rho)$ using its definition (Eq \ref{eq:2-was}), then $\mathcal{L}_{\rho}$ can be rewritten as the follows (See the appendix for details):
% \begin{multline}
%     \mathcal{L}_{\rho} =   \left(\min_{\pi \in \Pi(\mu_{k}, \rho)} \int \frac{1}{2h} \lVert x-y \rVert_2^2 d\pi(x,y) \right) \\
%     + D_f (\rho | \nu).
% \end{multline}
\begin{equation}
    \mathcal{L}_{\rho} =   \min_{\pi \in \Pi(\mu_{k}, \rho)} \int \frac{1}{2h} \lVert x-y \rVert_2^2 d\pi(x,y)  + D_f (\pi_{1} | \nu).
\end{equation}
Note that $\pi_1 = \rho$ in the above equation.
% Because $\pi_{1}=\rho$ and $D_f (\pi_{1} | \nu)$ is constant with respect to $\pi \in \Pi(\mu_{k}, \rho)$,
% \begin{equation} 
%     \mathcal{L}_{\rho} =   \min_{\pi \in \Pi(\mu_{k}, \rho)} \int \frac{1}{2h} \lVert x-y \rVert_2^2 d\pi(x,y)  + D_f (\pi_{1} | \nu).
% \end{equation}
Therefore, when combined with the minimization over $\rho$ in Eq \ref{eq:jko-again}, the JKO step is equivalent to the Source-fixed UOT problem, i.e., convex indicator $\varphi_{1}=\iota$ and $\varphi_{2}=f$ in Eq \ref{eq:uot}:
\begin{align} 
    &\pi^{\star} =  \underset{\pi_{0}=\mu_{k}}{\text{argmin}}
    \int c_{h}(x,y) d\pi(x,y) + D_{f}(\pi_1|\nu). \label{eq:equiv} \\
    &\mu_{k+1} =\pi^{\star}_{1}.
\end{align}
with $c_{h}(x,y) = \frac{1}{2h} \lVert x-y \rVert_2^2$.
Note that \jw{in the UOT problem, when} $\mu$, $\nu$ are probability distributions (i.e., positive measures with a total mass of 1), then the optimal $\pi^{\star}$ also has the same total mass \cite{semi-dual3}. Therefore, performing the optimization over the positive Radon measure in Eq \ref{eq:uot} is equivalent to performing the optimization over the joint probability distribution in Eq \ref{eq:equiv}.

% If we set $J(\rho) = D_f (\rho| \rho^\star)$, one-step of JKO scheme boils down into target-relaxed-UOTM.
% Example KL, Chi-square, Softplus, JSD.

% Suppose $\mu = \mu_0$ and $\nu$ are a source and target distribution, respectively. 
% Let $\mathcal{F}(\rho) := D_f \left( \rho | \nu \right)$.
% Then, our optimization problem Eq \ref{eq:jko} is as the follows: 
% \begin{equation} \label{eq:jko-again}
%     \mu_{k+1} = \underset{\rho \in \mathcal{P}(\mathbb{R}^d)}{\text{argmin}} \left[ \frac{1}{2h} \mathcal{W}_2^2 (\mu_{k}, \rho) + D_f (\rho | \nu) \right].
% \end{equation}
% In other words, by the definition of 2-Wasserstein distance $\mathcal{W}_2$, the objective function of the optimization problem is to find a positive joint Radon measure $\pi \in \mathcal{M}_+$ with restriction $\pi_0 = \mu_k$, minimizing the following cost term:  $\int \frac{1}{2h}  \lVert x-y \rVert_2^2 d\pi(x,y) + D_f (\pi_1 | \nu).$
% Formally, Eq \ref{eq:jko-again} can be reformulated as the follows:
% \begin{equation}
%     \underset{\pi_0 = \mu_k}{\inf} \left[ \int_{\mathbb{R}^d\times \mathbb{R}^d} \frac{1}{2h} \lVert x-y \rVert_2^2 d\pi(x,y) +D_f (\pi_1 | \nu)   \right].
% \end{equation}

\subsection{Generative Modeling with the Semi-dual Form of JKO step} \label{sec:algorithm}
In this subsection, we propose a generative model based on the JKO scheme for the WGF. Our model is derived through two steps: (1) Semi-dual form of the JKO scheme from the equivalence with the UOT problem and (2) Reparametrization trick for enhancing the scalability of the JKO scheme.

\paragraph{Semi-dual form of JKO step}
% The semi-dual form of the JKO step is obtained through the semi-dual form of the UOT problem \cite{uotm}. By selecting  $\varphi_1 = \iota$ and $\varphi_2 = f$ in Eq \ref{eq:uot}, we can express the semi-dual form of Eq \ref{eq:equiv} as follows (See the appendix for detail):
\jw{The semi-dual form of JKO step is obtained from its UOT interpretation (Eq \ref{eq:equiv}). By setting $\varphi_1 = \iota$ and $\varphi_2 = f$ in Eq \ref{eq:uot}, we can derive the semi-dual form of JKO step from the semi-dual form of UOT \cite{uotm} as follows (See the appendix for detail):
}
\begin{equation} 
    \sup_{v \in \mathcal{C}} \int v^c(x) d\mu_k(x) + \int f^\circ(v(y)) d\nu(y).
\end{equation}
where the $c$-transform of $v$ is defined as $v^c(x) := \inf_y ( c(x,y) - v(y) )$. Here, we parametrize $\Delta T_{k}$ as follows \cite{otm}:
\begin{equation}
    \Delta T_{k} : x \mapsto \underset{y}{\arg\min} \left( c(x,y) - v(y) \right).
\end{equation}
Then, $\Delta T_{k}$ satisfies the following:
\begin{equation}
    v^c(x) := c(x,\Delta T_{k}(x)) - v(\Delta T_{k}(x)).
\end{equation}
Therefore, the semi-dual form of the JKO step can be represented as the following adversarial learning objective:
\begin{multline} \label{eq:semi-dual-jko}
    \sup_{v \in \mathcal{C}} \int \inf_{\Delta T_{k}} \left[ c\left(x, \Delta T_{k}(x)\right) - v(\Delta T_{k}(x)) \right] d\mu_k(x) \\ 
    + \int f^\circ(v(y)) d\nu(y).
\end{multline}
Note that this objective for a single JKO step is equivalent to Source-fixed-UOTM \cite{uotm} between $\mu_{k}$ and $\mu_{k+1}= (\Delta T_{k})_{\#} \mu_{k}$. 
% \todo{Why the semi-dual is needed? What happens it we introduce the reparametrization trick to primal form?}

\begin{algorithm}[t]
\caption{Training algorithm}
    \begin{algorithmic}[1]
    % \Require 
    \REQUIRE Transport network $T_\theta$ and the discriminator network $v_\phi$. 
    \STATE $T_{\text{old}} = \text{Id}$
    \FOR{$k = 0, 1, 2 , \dots, K$}
        \FOR{$i = 0, 1, 2 , \dots, N$}
            \STATE Sample a batch $x\sim \mu$ and $y\sim \nu$.
            \STATE $\hat{y} = T_\theta(x)$.
            \STATE Update $\phi$ by \jw{minimizing the objective} $\mathcal{L}_v$.\\
            \vspace{-10pt}
            $$\mathcal{L}_v = v_\phi (\hat{y}) - f^\circ\left(v_\phi (y)\right)$$
            \vspace{-15pt}
            \STATE Sample a batch $x\sim \mu$.
            \STATE $\hat{y} = T_\theta(x)$, $\hat{y}_{\text{old}} = T_{\text{old}}(x)$.
            \STATE Update $\theta$ by \jw{minimizing the objective} $\mathcal{L}_T$. \\
            \vspace{-10pt}
            $$ \mathcal{L}_T = c\left(\hat{y}_{\text{old}}, \hat{y}\right) - v_\phi(\hat{y}) $$
            \vspace{-15pt}
        \ENDFOR
        \STATE $T_{\text{old}} \leftarrow T_\theta$
    \ENDFOR
    \end{algorithmic}
\label{alg:sjko}
% \vspace{-10pt}
\end{algorithm}

\begin{figure*}[h] 
    \centering
    \begin{subfigure}[b]{0.49\textwidth}
        \includegraphics[width=\textwidth]{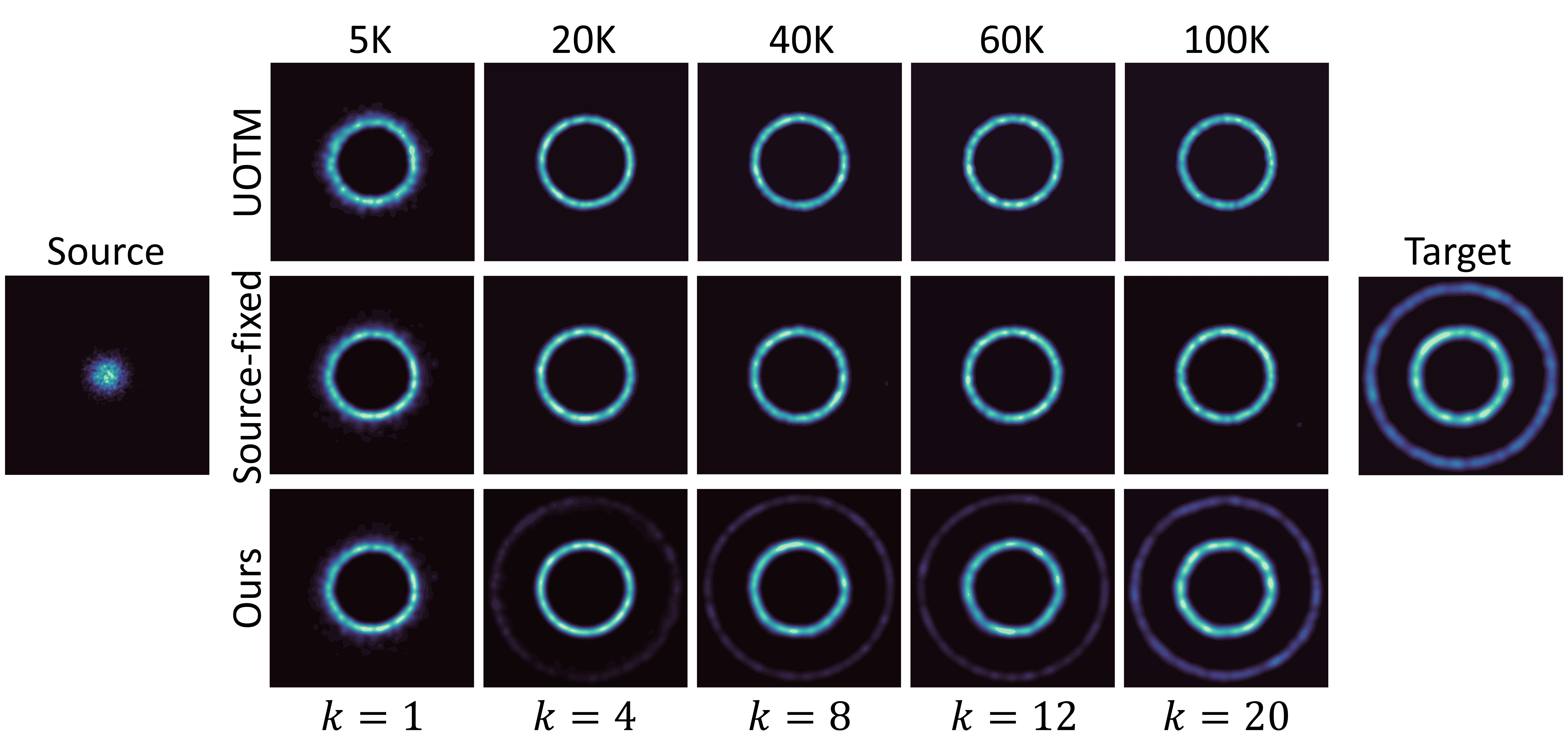}
        \caption{Two Circles}
    \end{subfigure}
    \hfill
    \begin{subfigure}[b]{0.49\textwidth}
        \includegraphics[width=\textwidth]{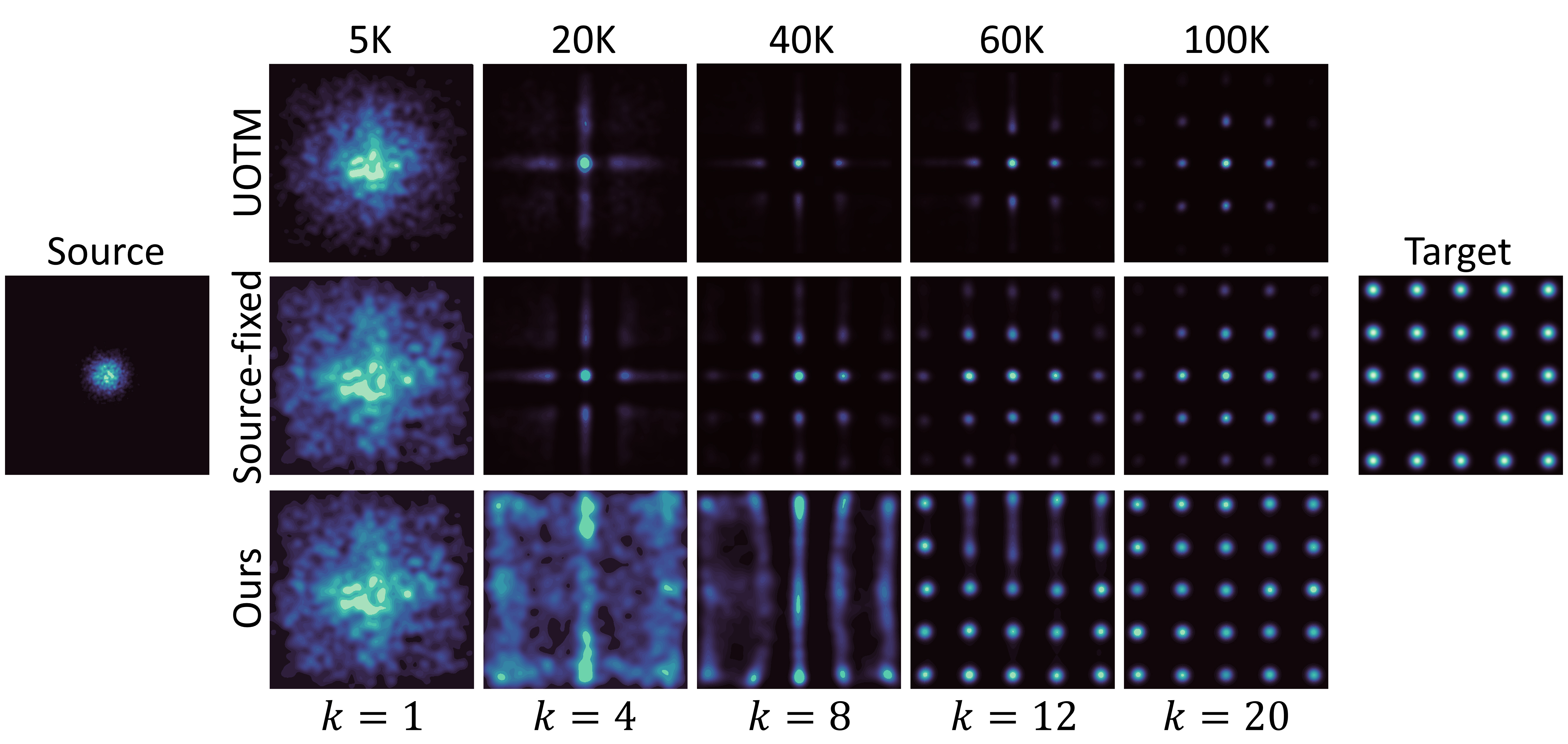}
        \caption{25-Gaussian Mixture}
    \end{subfigure}
    \vspace{-10pt}
    \caption{\textbf{Generation Results for UOTM, \jw{Source}-fixed UOTM, and S-JKO on Synthetic Datasets}. Each column shows the generated distribution at training iterations \{5, 20, 40, 60, 100\}K. $k$ denotes the index of JKO steps corresponding to that particular iteration.
    % Visualization results for UOTM, Sorce-fixed UOTM, and S-JKO on synthetic dataset. The results at the training iteration of 5K, 20K, 40K, 60K, and 100K are illustrated. For each iteration, the number of JKO steps for S-JKO is 1, 4, 8, 12, and 20.
    }
    \label{fig:toys}
    \vspace{-8pt}
\end{figure*}

\paragraph{Reparametrization trick}
The quadratic training complexity of the JKO models (Table \ref{tab:JKO-comparison}) stems from the necessity to simulate the entire trajectory of the JKO scheme $\{ \mu_{k} \}_{k}$ (Fig \ref{fig:comparison_jko}). To manage this challenge, we introduce a straightforward reparametrization trick. Suppose $T_k:\mathbb{R}^d \rightarrow \mathbb{R}^d$ be a measurable map such that
\begin{equation}
    T_k = \Delta T_{k}  \circ \cdots \circ \Delta T_{1} \circ \Delta T_{0}.
\end{equation}
$T_k$ satisfies $T_{k} := \Delta T_{k} \circ T_{k-1}$ and $(T_k)_\# \mu = \mu_{k+1}$.
Now, we introduce the reparametrization trick to $\mu_{k}$ in Eq \ref{eq:semi-dual-jko}:
\begin{multline} \label{eq:reparametrize}
    \mathcal{L}_{k} = \sup_{v \in \mathcal{C}} \int \inf_{T_{k}} [ c\left(T_{k-1} (x),  \, T_k (x)\right) - v(T_k(x))] d\mu(x) \\
    + \int f^\circ(v(y)) d\nu(y).
\end{multline}
Note that this reparametrized learning objective $\mathcal{L}_{k}$ is for the $k$-th JKO step. The comprehensive training procedure repeats this step for $k=0, \cdots, K$. Our reparametrization trick transforms the learning objective from $\Delta T_{k}$-training to $T_{k}$-training. Therefore, for each phase $k$, we possess a direct transport map $T_{k-1}$ that connects $\mu$ to $\mu_{k}$. In other words, for each phase, sampling from $\mu_{k}$ does not require simulating the entire trajectory. Instead, we can efficiently generate $\mu_{k}$ using a one-step inference. In this regard, \textbf{this reparametrization trick significantly contributes to managing the training complexity (Table \ref{tab:JKO-comparison}).}
Furthermore, there is an additional advantage in terms of training efficiency. For each phase transition, we initialize $T_{k}$ using the previous $T_{k-1}$. We hypothesize that this contributes to stable training during the entire training process 
% compared to random initialization of $\Delta T_{k}$ for each $k$. 
This training efficiency is empirically demonstrated in Sec \ref{sec:exp}.

\paragraph{Algorithm}
Finally, we present our training algorithm (Algorithm \ref{alg:sjko}), called the Semi-dual JKO scheme (\textbf{S-JKO}). The adversarial learning objective $\mathcal{L}_{k}$ is updated through alternating gradient descent, as in GAN \cite{gan}. We simplified Algorithm \ref{alg:sjko} by excluding non-dependent terms for each $v_{\phi}$ and $T_{\theta}$ (See the appendix for the more detailed Algorithm). Additionally, \textbf{note that when we conduct training for only one phase, i.e., $K=1$, our S-JKO is equivalent to the Source-fixed-UOTM \cite{uotm}}. In Sec 5, we will provide further clarification regarding the advantages over Source-fixed-UOTM.

% \section{Related Works}
% \paragraph{JKO scheme}

% \paragraph{OT-based Generative Models}
\section{Experiments} \label{sec:exp}
In this section, we conduct experiments on the various datasets to evaluate the following aspects of our model:
% In this section, we present qualitative and quantitative generation results of our model.
% We consider 
% Our discussion is organized as follows:
\begin{itemize}[topsep=-1.5pt, itemsep=0pt]
    \item In Sec. \ref{sec:synthetic}, we compare S-JKO with UOTMs regarding the distribution error between the generated and target distributions on synthetic datasets. % Distribution Matching on synthetic datasets.
    % In Sec. \ref{sec:synthetic}, we compare our model with UOTMs on synthetic datasets to verify whether each method accurately transports one distribution to another. Moreover, we visualize the training dynamics of our model.
    \item \jw{In Sec. \ref{sec:imagegeneration}}, we compare S-JKO with other JKO models regarding scalability on large-scale image datasets. Moreover, we demonstrate that S-JKO achieves competitive performance compared to state-of-the-art generative models.
    % In Sec. \ref{sec:imagegeneration}, we mainly compare the scalability of our model with JKO schemes on large-scale image datasets. Moreover, we verify whether our model shows descent performance compared to leading generative models. In particular, we evaluate on CIFAR-10 \cite{cifar10} and CelebA-HQ \cite{celeba} ($256\times 256$) datasets. 
    \item In Sec. \ref{sec:ablation}, we assess the robustness of S-JKO regarding JKO hyperparameters through ablation studies, such as the step size $h$, the number of JKO steps $K$, and the functional $\mathcal{F}(\cdot)$.
    % In Sec. \ref{sec:ablation}, to verify the robustness of our model against JKO hyperparameters, we conduct ablation studies on the JKO step size $h$ and the number of JKO steps $K$.
\end{itemize}
Throughout this paper, we considered two functionals for $\mathcal{F}(\cdot)$: KL divergence (KLD) $\mathcal{F}(\cdot)=D_{KL}(\cdot|\nu)$ and Jensen-Shannon divergence (JSD) $\mathcal{F}(\cdot)=D_{JSD}(\cdot|\nu)$.
Unless otherwise stated, $\mathcal{F}(\cdot)$ is KLD. For further implementation details, please refer to Appendix \ref{appen:Implementation}. 
% Throughout this paper, $D_f$ is set as a KL divergence (KLD) or Jensen-Shannon divergence (JSD) in our model (Eq ??). Unless otherwise stated, $D_f$ is KLD. For further implementation details, refer to Appendix \ref{appen:Implementation}. 

\begin{figure*}[t]
    \centering
    \hfill
    \begin{minipage}{.48\linewidth}
        \centering
        % \vspace{-10pt}
        \setlength\tabcolsep{2.0pt}
        \renewcommand\thetable{2}
        \captionof{table}{
        %Results on an unconditional generation of CIFAR-10.
        %\textbf{Results on a CIFAR-10.}
        \textbf{Image Generation on CIFAR-10.} $\dagger$ indicates the results conducted by ourselves.
        }
        \label{tab:compare-cifar10}
        % \vspace{-2pt}
        \scalebox{0.75}{
            \begin{tabular}{cccc}
            \toprule
            Class & Model &                FID ($\downarrow$)     \\ 
            \midrule
            \multirow{6}{*}{\textbf{GAN}} & SNGAN+DGflow \cite{ansari2020refining} &           9.62    \\
              % & AutoGAN \cite{gong2019autogan} &              12.4    \\
              % & TransGAN \cite{jiang2021transgan} &            9.26       \\
              & StyleGAN2 w/o ADA \cite{karras2020training} &   8.32    \\
              & StyleGAN2 w/ ADA \cite{karras2020training} &       2.92     \\
              &  DDGAN (T=1)\cite{xiao2021tackling}&     16.68  \\
              &  DDGAN \cite{xiao2021tackling}&     3.75   \\
              &    RGM \cite{rgm}             &     \textbf{2.47}   \\
            \midrule
            \multirow{8}{*}{\textbf{Diffusion}}
              &  NCSN \cite{song2019generative}&      25.3       \\
              &  DDPM \cite{ddpm}&                 3.21   \\
              &  Score SDE (VE) \cite{scoresde} &     2.20   \\
              &  Score SDE (VP) \cite{scoresde}&       2.41     \\
              &  DDIM (50 steps) \cite{ddim}&           4.67   \\
              &  CLD \cite{dockhorn2021score} &                   2.25    \\
              &  Subspace Diffusion \cite{jing2022subspace} &      2.17  \\
              &  LSGM \cite{vahdat2021score}&                \textbf{2.10}     \\
            \midrule
            % \multirow{5}{*}{\textbf{VAE\&EBM}} 
            %   & NVAE \cite{vahdat2020nvae} &              23.5       \\
            %   & Glow \cite{kingma2018glow} &                48.9          \\
            %   & PixelCNN \cite{van2016pixel} &             65.9         \\
            %   & VAEBM \cite{xiao2020vaebm} &             12.2       \\
            %   & Recovery EBM \cite{recovery} &     \textbf{9.58}  \\ 
            % \midrule
            \multirow{2}{*}{\textbf{Flow Matching}}
                  &    FM \cite{lipman2022flow}               &   6.35    \\
                  &    OT-CFM \cite{tong2024improving}     &   \textbf{3.74}     \\
            \midrule      
            \multirow{8}{*}{\textbf{OT-based}}
              &    WGAN \cite{wgan}                       &   55.20     \\
              &    WGAN-GP\cite{wgan-gp}            &   39.40     \\
              % & Robust-OT \cite{robust-ot} & 21.57 & - \\
              % &    AE-OT-GAN \cite{ae-ot-gan}       &   17.10     \\
              &    OTM* (\textit{Small}) \cite{otm}      &   21.78   \\
              &    OTM (\textit{Large})$^\dagger$                &   7.68    \\
              &    UOTM (\textit{Small}) \cite{uotm}      &   12.86     \\
              &    UOTM (\textit{Large}) \cite{uotm}      &   2.97$\pm$0.07    \\
              &    Source-fixed UOTM (\textit{Small})$^\dagger$     &   14.4  \\
              &    Source-fixed UOTM (\textit{Large})     &  7.53  \\
              \midrule \multirow{7}{*}{\textbf{WGF-based}}
              &     JKO-Flow \cite{vwgf}        &    23.1   \\
              &     JKO-iFlow \cite{nfjko}        &    29.1   \\
              &    NSGF \cite{nsgf} (\textit{Large})      &   5.55    \\
              &    \textbf{S-JKO} (\textit{Small})$^\dagger$      &   8.78    \\
              &    \textbf{S-JKO} (JSD) (\textit{Small})$^\dagger$      &   \textbf{8.24}   \\
              &    \textbf{S-JKO} (\textit{Large})$^\dagger$      &   \textbf{2.62} $\pm$0.04     \\
              &    \textbf{S-JKO} (JSD) (\textit{Large})$^\dagger$      &   2.66$\pm$0.05     \\
            \bottomrule
            \end{tabular}
            }
    \end{minipage}
    \hfill
    \begin{minipage}{.48\linewidth}
        \centering
%         \setlength\tabcolsep{2.0pt}
%         \captionof{table}{\textbf{Comparison of JKO schemes, UOTMs, and our model.} 
%         Time is a wall-clock training time. 
%         Complexity is the complexity of training time with respect to $K$ and $d$. 
% NFE is the number of function evaluations required to generate a sample.}
%         % \vspace{10pt}
%         \scalebox{0.8}{
%         \begin{tabular}{ccccc}
%             \toprule
%             Model   &  Time & Complexity &  NFE & FID \\
%             \midrule
%             \citet{wgfkorotin}  &  - & $O(K^2d^3)$ & - & -  \\
%             \citet{vwgf} &     $\geq$ 50h & $O(K^2)$ & 160 & 23.1 \\
%             \citet{nfjko}  & $\geq$ 30h  & $O(K^2)$ & $\geq$ 50 & 29.1  \\
%             \midrule
%             fixed-source UOTM (\textit{small}) & 6h & $O(1)$ & 1 & 14.4 \\
%             fixed-source UOTM  (\textit{large}) & 50h & $O(1)$ & 1 & 7.53 \\
%             \midrule
%             Ours (\textit{small})  & 6h & $O(K)$ & 1 & \textbf{8.78} \\
%             Ours (\textit{large})  & 50h & $O(K)$ & 1 & \textbf{2.65} \\
%             \bottomrule
%         \end{tabular}
%         \vspace{7pt}
        \setlength\tabcolsep{2.0pt}
        \renewcommand\thetable{3}
        \centering
        \captionof{table}{
        \textbf{Image Generation on CelebA-HQ.}
        } \label{tab:compare-celeba}
        \vspace{5pt}
        \scalebox{0.75}{
            \begin{tabular}{ccc}
                \toprule
                Class & Model &   FID ($\downarrow$)         \\
                \midrule
                \multirow{6}{*}{\textbf{Diffusion}}
                & Score SDE (VP) \cite{scoresde} &    7.23  \\
                & Probability Flow \cite{scoresde} & 128.13 \\
                & LSGM \cite{vahdat2021score}&       7.22 \\
                & UDM \cite{kim2021score}&         7.16   \\
                & DDGAN \cite{xiao2021tackling} & 7.64    \\
                & RGM \cite{rgm} &  \textbf{7.15}  \\
                \midrule
                \multirow{5}{*}{\textbf{GAN}} 
                & PGGAN \cite{karras2017progressive} &   8.03    \\
                & Adv. LAE \cite{pidhorskyi2020adversarial} &  19.2       \\
                & VQ-GAN \cite{esser2021taming} &    10.2 \\
                & DC-AE \cite{parmar2021dual} &  15.8   \\
                & StyleSwin \citep{zhang2022styleswin} & \textbf{3.25} \\
                \midrule
                \multirow{3}{*}{\textbf{VAE}} 
                & NVAE \cite{vahdat2020nvae} &    29.7    \\
                & NCP-VAE \cite{aneja2021contrastive} &     24.8   \\
                & VAEBM \cite{xiao2020vaebm}&  \textbf{20.4}        \\
                \midrule
                \multirow{2}{*}{\textbf{OT-based}} & UOTM& \textbf{6.36} \\
                & Source-fixed UOTM$^\dagger$ &  7.36\\
                \midrule
                \multirow{2}{*}{\textbf{WGF-based}} & \textbf{S-JKO}$^\dagger$ & 6.40 \\
                 & \textbf{S-JKO (JSD)}$^\dagger$ & \textbf{5.46}\\
                \bottomrule
            \end{tabular}
            % \begin{tabular}{cccc}
            %     \toprule
            %     Class & Model &   FID ($\downarrow$)  & NFE ($\downarrow$) \\
            %     \midrule
            %     \multirow{1}{*}{\textbf{WGF-based}} & 
            %      \textbf{S-JKO (JSD)}$^\dagger$ & \textbf{5.46} & 1 \\
            %      \midrule
            %     \multirow{1}{*}{\textbf{OT-based}} & UOTM \cite{uotm} & \textbf{6.36} & 1 \\
            %     \midrule
            %     \multirow{6}{*}{\textbf{Diffusion}}
            %     & Score SDE (VP) \cite{scoresde} &    7.23 & 4000  \\
            %     & Probability Flow \cite{scoresde} & 128.13 & 335 \\
            %     & LSGM \cite{vahdat2021score}&       7.22 & 23 \\
            %     & UDM \cite{kim2021score}&         7.16 & 2000   \\
            %     & DDGAN \cite{xiao2021tackling} & 7.64 & 4    \\
            %     & RGM \cite{rgm} &  \textbf{7.15} & 4  \\
            %     \midrule
            %     \multirow{5}{*}{\textbf{GAN}} 
            %     & PGGAN \cite{karras2017progressive} &   8.03 & 1   \\
            %     & Adv. LAE \cite{pidhorskyi2020adversarial} &  19.2 & 1       \\
            %     & VQ-GAN \cite{esser2021taming} &    10.2 & 1 \\
            %     & DC-AE \cite{parmar2021dual} &  15.8 & 1   \\
            %     & StyleSwin \citep{zhang2022styleswin} & \textbf{3.25} & 1 \\
            %     \midrule
            %     \multirow{3}{*}{\textbf{VAE}} 
            %     & NVAE \cite{vahdat2020nvae} &    29.7 & 1    \\
            %     & NCP-VAE \cite{aneja2021contrastive} &     24.8  & 1  \\
            %     & VAEBM \cite{xiao2020vaebm}&  \textbf{20.4} & 1       \\
            %     \bottomrule
            % \end{tabular}
            }
    % \vspace{-10pt}
    % \begin{figure}[H]
    %     \centering
    %     \includegraphics[width=0.5\textwidth]{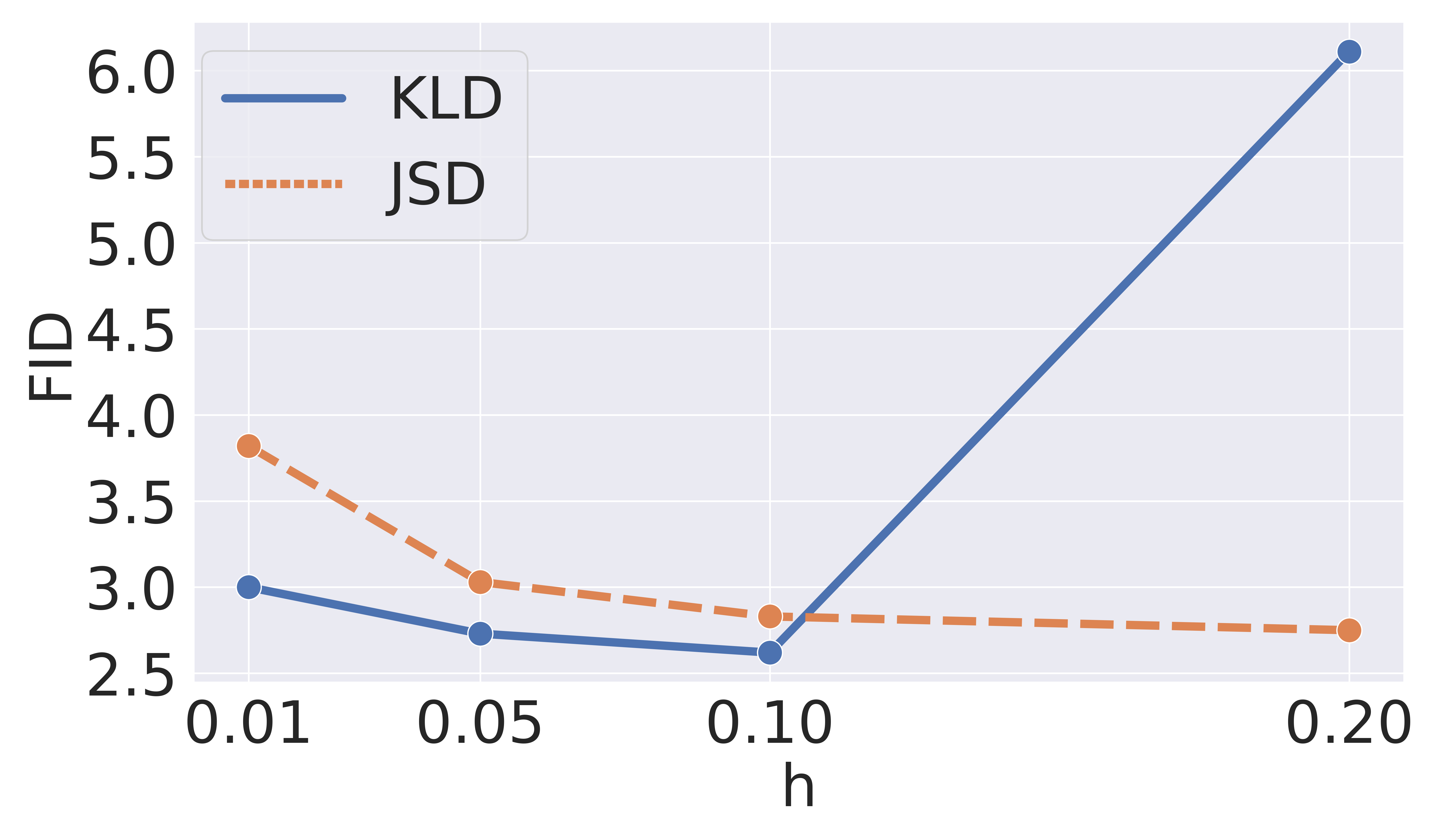}
    %     \vspace{-10pt}
    %     \caption{Ablation on h for big model?}
    %     \label{fig:abl_h}
    % \end{figure}

    % \vspace{10pt}
        % \hfill
    % \begin{minipage}{.48\linewidth}
        \vspace{5pt}
        \centering
        % \vspace{-10pt}
        % \setlength\tabcolsep{2.0pt}
        \renewcommand\thetable{4}
        \captionof{table}{\textbf{Ablation Study on Phase Number $K$.}} \label{tab:abl_K}
        \vspace{5pt}
            \scalebox{0.8}{
                \begin{tabular}{cccccc}
                    \toprule
                    K   &  10 & 25 & 50 & 100 & 200 \\
                    \midrule
                    S-JKO (KLD) & 2.77 & 2.83 & \textbf{2.62} & 2.73 &   2.67  \\ 
                    S-JKO (JSD) & 3.15 & 3.23 & 2.86 &   2.83 & \textbf{2.66} \\
                    \bottomrule
                \end{tabular}
            }
    \vspace{5pt}
    % \end{minipage}
    % \hfill
        % \begin{minipage}{.48\linewidth}
            \centering
            \renewcommand\thetable{5}
            \centering
            \captionof{table}{\textbf{Ablation Study on Step Size $h$.}} \label{tab:abl_h}
            \vspace{5pt}
            \scalebox{0.8}{
                \begin{tabular}{ccccc}
                    \toprule
                    h   & 0.01 & 0.05 & 0.1 & 0.2 \\ 
                    \midrule
                    S-JKO (KLD) & 2.91 & 2.71 & \textbf{2.62} &   6.11 \\
                    S-JKO (JSD) & 3.82 & 3.03 &   2.83 & \textbf{2.75} \\
                    \bottomrule
                \end{tabular}
                }
        % \end{minipage}
        \hfill
    
    % \begin{figure}[H]
    %     \centering
    %     \includegraphics[width=0.8\textwidth]{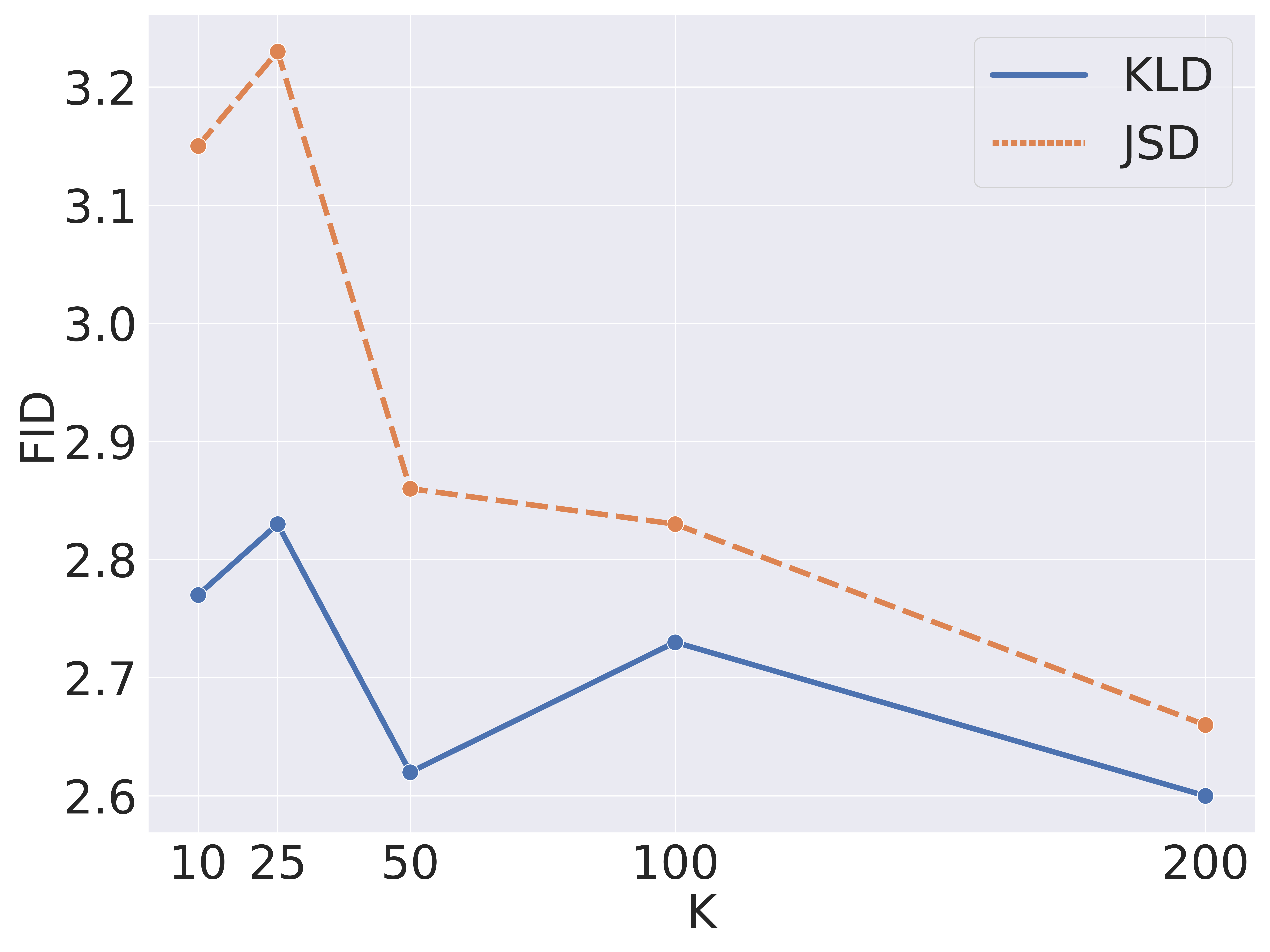}
    %     \vspace{-10pt}
    %     \caption{\textbf{Ablation Study on the number of JKO steps $K$.}
    %     % Ablation on the number of JKO steps $K$.
    %     }
    %     \label{fig:abl_K}
    % \end{figure}
    \end{minipage}
    \hfill
    \vspace{-10pt}
\end{figure*}
\subsection{Distribution Matching on Synthetic Datasets} \label{sec:synthetic}
As described in Sec \ref{sec:relationship}, Source-fixed-UOTM is equivalent to our S-JKO with only one phase training ($K=1$). UOTM variants exhibit prominent scalability \cite{uotm}. However, the limitation of UOTM variants is that they induce inherent distribution errors (Sec \ref{sec:background_uot}). Therefore, \textbf{we evaluate whether our S-JKO can mitigate this distribution error through multi-phase training}. We conducted experiments on two synthetic datasets: Two Circles and 25-Gaussian Mixture. To observe the clear difference, we chose multi-modal datasets as the target datasets, with densities that are spread extensively away from the origin.

Fig \ref{fig:toys} illustrates the generated densities for every 5k training iterations. Note that $k$ on the bottom of the figure indicates the number of JKO steps completed at that training iteration. The generated samples from UOTMs (UOTM and Source-fixed UOTM) tend to be confined to a subset of modes near the origin. In contrast, our model successfully captures and covers all modes present in complex multi-modal distributions. We interpret this phenomenon happens due to the inherent distribution errors of the UOTMs. UOTMs aim to minimize the transport cost between the source and generated distributions. Consequently, \textbf{the generated distribution from UOTMs tends to cluster around the origin}, because the source distribution is a Gaussian distribution centered at the origin. This mode collapse problem of UOTMs can be exacerbated when the target distribution is spread out far from the origin. Meanwhile, the JKO scheme gradually transforms the source distribution into the target distribution, by approximating the Wasserstein Gradient Flow towards the target distribution. As we iterate through the JKO steps, the discrepancy between the generated and the target distributions gradually decreases (Fig \ref{fig:toys}). Therefore, \textbf{our S-JKO mitigates the distribution error of its one-step variant (Source-fixed UOTM) and UOTM}.

\subsection{Scalability on Image Datasets} \label{sec:imagegeneration}
\paragraph{Training Time}
% In this paragraph, we compare the training time for WGF-based algorithms.
In this paragraph, we compare the training times of existing JKO-based algorithms with our model. As aforementioned in Sec. \ref{sec:limWGF}, recent JKO-based approaches \cite{vwgf, nfjko} parametrize the transport map for each JKO step with a separate neural network. This parametrization substantially slows down the training process. On the contrary, as discussed in Sec. \ref{sec:method}, our model effectively reduces training time complexity through the reparametrization trick. To verify whether our model reduces the training time in practice, we measured the wall-clock training time on the single GPU of RTX 3090Ti.

% As aforementioned in Sec. \ref{sec:limWGF}, recent JKO-based approaches \cite{vwgf, nfjko, nsgf} parametrize transport map for each JKO step with a separate neural network, which substantially slows down the training process.
% On the contrary, as discussed in Sec. \ref{sec:method}, due to the reparametrization trick, our model effectively enhances training time complexity. % and significantly reduces the NFEs for inference.
% To verify whether our model genuinely reduces the training time, we measured wall-clock training time on the single GPU of RTX 3090Ti.

As illustrated in Table \ref{tab:JKO-comparison}, training our model with a \textit{Small} backbone only requires 6 hours, which is more than 5 times faster than other comparable JKO-based models. Surprisingly, \textbf{our model with \textit{Small} backbone not only reduces training time but also significantly outperforms other JKO-based methods}, achieving an FID score of 8.78. In contrast, other JKO-based models show FID scores over 20.

% As illustrated in Table \ref{tab:JKO-comparison}, the training \textit{small} model only requires 6 hours of training, which is more than 5 times faster than other existing JKO-based models.
% Surprisingly, our model on \textit{small} architecture not only significantly reduces training time, but also highly surpasses JKO-based methods, achieving the FID score of 8.78.
% % Moreover, our model only requires 1 NFE for inference while comparisons require more than 50 NFEs.

Furthermore, when compared with the Source-fixed UOTM ($K=1$), \textbf{our model exhibits a comparable wall-clock training time to UOTMs}. Specifically, we maintain training time by decreasing the number of iterations per each JKO step ($N$) (See the Appendix \ref{appen:Implementation} for the detail). As a result, our model only requires a similar number of iterations to UOTMs, which is approximately 10K iterations. This efficiency comes from our reparametrization trick that enables convergence within this decreased training iterations.

% Meanwhile, since our model utilizes more JKO steps ($K=50$) in contrast to the Sorce-fixed UOTM ($K=1$), we maintain training time by decreasing the number of iterations per JKO steps ($N$).
% As a result, our model only requires a similar number of iterations with UOTMs, which is approximately 10K iterations.
% Thus, as shown in Table \ref{tab:JKO-comparison} our model exhibits a comparable wall-clock training time to UOTMs.

% In addition, we configured the total number of iterations to range from 80K to 10K iterations by adopting the number of iterations per JKO step ($N$).
% Since the Sorce-fixed UOTM also requires about 10K iterations for training, our model exhibits a comparable wall-clock training time to Sorce-fixed UOTM.
% As demonstrated in Table \ref{tab:JKO-comparison}, these models take approximately 6 and 50 hours to train on \textit{small} and \textit{large} architectures, respectively.
% \vspace{-5pt}
\paragraph{Image Generation}
We assessed our model on two benchmark datasets: CIFAR-10 ($32\times 32$) \cite{cifar10} and CelebA-HQ ($256\times 256$) \cite{celeba}.
% The qualitative performance of our model is presented in Fig ??.
For the quantitative evaluation, we employed the FID \cite{fid} score.
Table \ref{tab:compare-cifar10} shows that \textbf{our model with \textit{Large} backbone demonstrates state-of-the-art results on CIFAR-10 among existing WGF-based models, with an FID of 2.62}. \jw{(See Table \ref{tab:compare-cifar10_full} for a more extensive comparison with various generative models.)}
Our model outperforms the second-best-performing WGF-based model, NSGF \cite{nsgf}, which shows an FID of 5.55, by a significant margin. Note that NSGF employs the same \textit{Large} backbone of NCSN++ \cite{scoresde}.
Furthermore, our model achieves a competitive FID score of 5.46 on CelebA-HQ ($256\times 256$).
To the best of our knowledge, our model is the first WGF-based generative model that has achieved comparable results with state-of-the-art models on image generation tasks, especially on high-resolution image datasets like CelebA-HQ.

% We assessed our model on the two benchmark datasets: CIFAR-10 ($32\times 32$) \cite{cifar10} and CelebA-HQ ($256\times 256$) \cite{celeba}.
% % The qualitative performance of our model is presented in Fig ??.
% For the quantitative evaluation, we employed the FID \cite{fid} score.
% As indicated in Table \ref{tab:compare-cifar10}, our model demonstrates the state-of-the-art results among the existing WGF-based models, with an FID of 2.65.
% Our model outperforms the second-best-performing WGF-based model, NSGF \cite{nsgf}, FID of 5.55, by a significant margin.
% Furthermore, our model achieves a comparable FID score of 6.75 on CelebA-HQ ($256\times 256$).
% To the best of our knowledge, our model is the first WGF-based generative model that has shown comparable results with state-of-the-art models on image generation tasks.

Moreover, to validate the necessity of multiple JKO steps, we compare our model with the Source-fixed UOTM, which is equivalent to a single-JKO step model ($K=1$). \jw{Table \ref{tab:compare-cifar10}} demonstrates that our model outperforms the Source-fixed UOTM in both architectures on CIFAR-10. Furthermore, our model surpasses the Source-fixed UOTM on CelebA-HQ ($256\times256$) by a large margin. \jw{Combining this with the result from Sec. \ref{sec:synthetic}}, we conclude that leveraging multiple JKO steps helps make the generated distribution closer to the target distribution.

% On the other hand, to validate the necessity of multiple JKO steps, we compare our model with the Sorce-fixed UOTM, which is equivalent to a single-JKO step model ($K=1$).
% As shown in Table \ref{tab:JKO-comparison}, the performance of our model highly exceeds the Sorce-fixed UOTM in both architectures on the CIFAR-10 dataset.
% Moreover, our model surpasses the CelebA-HQ ($256\times256$) dataset by a large margin.
% Combining it with generation results in Sec. \ref{sec:synthetic}, leveraging multiple JKO steps facilitates the transport to the target distribution.

% The result of \textif{small} model trained with DCGAN architecture, which is depicted as \textit{small} model, only requires 6 hours of training, which significantly reduce  

% (Big tables/ Qualitative Generation Results, If possible, add precision/recall results to explain that our model successfully matches distributions.) CIFAR10, CelebA-HQ-256 (if possible.) If it is not possible/ train CelebA-64.

% \paragraph{Fast Convergence} 

% \pargraph{Fast Evalutation}

% First, I am pretty sure that our method will converge faster than the JKO scheme (wall clock time). Furthermore, I think our method should converge faster than UOTM since it automatically improves cost function while training (If the results are bad, just erase this paragraph.)

\subsection{Ablation Studies} \label{sec:ablation}
% In this section, we verify the robustness of our model on the main hyperparameters: The number of JKO step $K$ and the step size $h$.
In this section,\textbf{ we conduct ablation studies to assess the robustness of our model on the main hyperparameters for the JKO scheme. }
These parameters include the number of JKO step $K$, the step size $h$, and the functional $\mathcal{F}(\cdot)$.

\paragraph{Ablation on Phase Number $K$} 
We conducted an ablation study on the number of JKO steps $K$. To maintain the total training iterations, we adjusted the number of iterations $N$ per JKO step accordingly. We tested $K\in \{ 10, 25, 50, 100, 200  \}$ for two functionals (KLD and JSD). For each KLD and JSD experiment, we fixed the total number of iterations to 10K and 8K, respectively (See the appendix for details). Table \ref{tab:abl_K} shows that our model with both KLD and JSD shows similar performance across diverse $K$, which demonstrates that our model is robust to $K$. 
Moreover, we observed a marginal improvement in performance as the number of steps increased, achieving FID scores of 2.60 and 2.66 in the KLD and JSD experiments, respectively. 
% This improvement with sufficiently large $K$ aligns with the principles of the Wasserstein Gradient Flow (WGF). This is because a sufficient number of steps are required to converge to a complex target distribution.
We interpret this phenomenon through WGF. A sufficient number of steps are required to converge to the complex data distribution.

% Fig \ref{fig:abl_K} presents ablation studies on the number of JKO steps $K$.
% To maintain the total training iterations, we adjust the number of iterations $N$ per JKO step.
% For each KLD and JSD experiment, we fixed the total number of iterations to 10K and 8K, respectively.
% As Fig \ref{fig:abl_K} depicts, our model on both KLD and JSD shows similar performance under diverse $K$, which indicates that our model is robust to $K$.
% Moreover, there is a slight improvement in performance as the number of steps increases, achieving 2.60 and 2.66 FID scores on KLD and JSD experiments, respectively.
% Aligned with the concept of WGF, employing a sufficiently large $K$ enhances the distributional matching between generated and target distribution.

% We fix other hyperparameters 
\paragraph{Ablation on Step Size $h$}
We performed an ablation study on step size $h$ (Table \ref{tab:abl_h}). We experimented $h \in \{ 0.01, 0.05, 0.1, 0.2 \}$ while fixing $K=50$. Both S-JKOs employing KLD and JSD showed the best results around $h=0.1$ and comparable performance at $h=0.05$. However, the performance on a too-small $h=0.01$ slightly declines. We hypothesize that this is because too small $h$ is insufficient to transport the source distribution to the target distribution, within the fixed number of JKO steps. Moreover, S-JKO-KLD exhibited sharp degradation at $h=0.2$. We interpret this is because of the discretization error of the JKO step. 
Interestingly, this error is much smaller for JSD. Investigating this difference is beyond the scope of this work. However, we believe this would be an interesting future research.
% Fig ?? demonstrates ablation studies on step size $h$.
% Under reasonably small values, the model shows comparable performance.
% However, the performance on a very small $h$ slightly declines.
% We hypothesize that the performance decreases on small $h$ because the small $h$ enforces push-forward distribution (${T_k}_\# \mu$ to remain near the source distribution $\mu$.

\paragraph{Ablation on $f$-divergence}
% Maybe a table? including chisquare
During our ablation study on other hyperparameters, we also examined the impact of $f$-divergence: KLD and JSD. In summary, both $f$-divergences significantly outperform other JKO models and the Source-fixed UOTM on CIFAR-10 (Table \ref{tab:compare-cifar10}), and outperform the Source-fixed UOTM on CelebA-HQ (Table \ref{tab:compare-celeba}). S-JKO-KLD achieves slightly better results than S-JKO-JSD on both datasets. However, S-JKO-JSD is more robust to larger $h$.

\begin{figure}[t]
    \centering
    \begin{minipage}{1\linewidth}
        \vspace{5pt}
        \centering
        % \vspace{-10pt}
        % \setlength\tabcolsep{2.0pt}
        % \renewcommand\thetable{4}
        \captionof{table}{
        \textbf{logSymKL($\downarrow$) between Ground-Truth WGF and Each method at $t=0.5$ for dimensions $d=2, \cdots, 10$.
        }
        }
        \label{tab:numeric_t_0.5}
        \vspace{5pt}
            \scalebox{0.75}{
                \begin{tabular}{cccccc}
                    \toprule
                    Model &	Dual JKO &	EM 50K &EM PR 10K &	ICNN JKO &	Ours \\
                    \midrule
                    $d=2$ &	$-1.4$ &	$-2.1$ &	$-2.0$ &	$\boldsymbol{-2.6}$ &	$-2.3$ \\
                    $d=4$ &	$-0.3$ &	$-1.0$ &	$-0.8$ &	$\boldsymbol{-2.1}$ &	$-0.9$ \\
                    $d=6$ &	$ 0.1 $ &	$ -0.4 $ &	$ -0.2 $ &	$ \boldsymbol{-1.8} $ &	$\boldsymbol{-1.8}$ \\
                    $d=10$ &	$0.6$ &	$0.4$ &	$0.6$ &	$\boldsymbol{-1.8}$ &	$-0.1$ \\
                    \bottomrule
                \end{tabular}
            }
        \vspace{5pt}
        \centering
        \captionof{table}{\textbf{logSymKL($\downarrow$) between Ground-Truth WGF and Each method at $t=0.9$ for dimensions $d=2, \cdots, 10$.}
        } 
        \label{tab:numeric_t_0.9}
        \vspace{5pt}
        \scalebox{0.75}{
            \begin{tabular}{cccccc}
                \toprule
                Model &	Dual JKO &	EM 50K & EM PR 10K &	ICNN JKO &	Ours \\
                \midrule
                $d=2$ &	$-1.1$ &	$-2.3$ &	$-1.9$ &	$\boldsymbol{-2.4}$ &	$\boldsymbol{-2.4}$ \\
                $d=4$ &	$ -0.7 $ &	 $ -1.0 $ &	$ -0.8 $ &	$ \boldsymbol{-2.1} $ &	$ -1.2 $ \\
                $d=6$ &	$ -0.5 $ &	$ -0.3 $ &	$ -0.1 $ &	$ \boldsymbol{-2.2} $ &	$-0.8$ \\
                $d=10$ &	$ 0.1$ &	$ 0.4 $ &	$ 0.4 $ &	$ \boldsymbol{-1.8} $ &	$-0.1$ \\
                \bottomrule
                \end{tabular}
            }
        % \end{minipage}
    \end{minipage}
    \vspace{-10pt}
\end{figure}

\subsection{Numerical Comparison to Ground-Truth WGF} \label{sec:numerical}

\jw{
We numerically compared our model to the ground truth solution of WGF to assess its accuracy. We followed the experimental settings of the Ornstein-Uhlenbeck process experiments in \citet{wgfkorotin}. Specifically, we measured the symmetric KL divergence between the ground-truth solution and the approximate WGF recovered by each method. (See \citet{wgfkorotin} for the details of each method). Table \ref{tab:numeric_t_0.5} and \ref{tab:numeric_t_0.9} present the results. ICNN JKO \citet{wgfkorotin} presents the best symmetric KL divergence to the ground truth solution, while our approach demonstrates the second-best results. However, ICNN JKO requires additional cubic complexity of $O(K^2 d^3)$ for approximating functional $\mathcal{F}(\rho)$ (Table \ref{tab:JKO-comparison}), where $K$ denotes the number of JKO steps and $d$ refers to the data dimension. Hence, it is challenging to apply ICNN JKO to high-dimensional data, such as image datasets in our paper. In this respect, our method provides competitive scalability to high-dimensional datasets, demonstrating a favorable trade-off between scalability and accuracy.
}

\section{Conclusion}
In this paper, we introduce S-JKO, a generative model based on the semi-dual form of the JKO scheme. 
Our work addresses the scalability challenges in previous JKO-based approaches by leveraging (i) the semi-dual form of the JKO scheme, and (ii) by reparametrizing the transport map. 
Additionally, we explore the relationship with UOTM and enhance the distributional matching between the generated distribution and the target distribution. 
Through comprehensive experiments on 2D synthetic datasets and large-scale benchmark datasets like CIFAR-10 and CelebA-HQ, we demonstrate that our proposed model generates high-quality samples in large-scale data while faithfully capturing the underlying distribution.

\section*{Acknowledgements}
This work was supported by KIAS Individual Grant [AP087501] via the Center for AI and Natural Sciences at Korea Institute for Advanced Study, and NRF grant[RS-2024-00421203].

\section*{Impact Statement}
The advancement of image generation techniques carries the significant potential to influence various scientific and industrial domains, such as machine learning, finance, image synthesis, health care, and anomaly detection. 
Our method will improve the models in this area since our method addresses the main challenge of generative models:  
Generating high-quality samples in large-scale datasets while faithfully transporting the distribution to the data. 
As a result, we anticipate that our model can play a role in addressing the negative social impacts associated with existing generative models that struggle to capture the full diversity of data.
On the other hand, the potential negative societal impact of our work is that generative models tend to learn dependencies on the semantics of data, potentially amplifying existing biases. 
Thus, deploying such models in real-world applications necessitates vigilant monitoring to prevent the reinforcement of societal biases present in the data. 
It is crucial to meticulously control the training data and modeling process of generative models to mitigate potential negative societal impacts.

\bibliography{mybib}

\begin{thebibliography}{77}
\providecommand{\natexlab}[1]{#1}
\providecommand{\url}[1]{\texttt{#1}}
\expandafter\ifx\csname urlstyle\endcsname\relax
  \providecommand{\doi}[1]{doi: #1}\else
  \providecommand{\doi}{doi: \begingroup \urlstyle{rm}\Url}\fi

\bibitem[Adams et~al.(2011)Adams, Dirr, Peletier, and Zimmer]{adams2011large}
Adams, S., Dirr, N., Peletier, M.~A., and Zimmer, J.
\newblock From a large-deviations principle to the wasserstein gradient flow: a new micro-macro passage.
\newblock \emph{Communications in Mathematical Physics}, 307:\penalty0 791--815, 2011.

\bibitem[Altekr{\"u}ger et~al.(2023)Altekr{\"u}ger, Hertrich, and Steidl]{riesz}
Altekr{\"u}ger, F., Hertrich, J., and Steidl, G.
\newblock Neural wasserstein gradient flows for discrepancies with riesz kernels.
\newblock In \emph{International Conference on Machine Learning}, pp.\  664--690. PMLR, 2023.

\bibitem[Alvarez-Melis et~al.(2022)Alvarez-Melis, Schiff, and Mroueh]{jkoex1}
Alvarez-Melis, D., Schiff, Y., and Mroueh, Y.
\newblock Optimizing functionals on the space of probabilities with input convex neural networks.
\newblock \emph{Transactions on Machine Learning Research}, 2022.

\bibitem[Ambrosio et~al.(2005)Ambrosio, Gigli, and Savar{\'e}]{ambrosio}
Ambrosio, L., Gigli, N., and Savar{\'e}, G.
\newblock \emph{Gradient flows: in metric spaces and in the space of probability measures}.
\newblock Springer Science \& Business Media, 2005.

\bibitem[Amos et~al.(2017)Amos, Xu, and Kolter]{icnn}
Amos, B., Xu, L., and Kolter, J.~Z.
\newblock Input convex neural networks.
\newblock In \emph{International Conference on Machine Learning}, pp.\  146--155. PMLR, 2017.

\bibitem[Aneja et~al.(2021)Aneja, Schwing, Kautz, and Vahdat]{aneja2021contrastive}
Aneja, J., Schwing, A., Kautz, J., and Vahdat, A.
\newblock A contrastive learning approach for training variational autoencoder priors.
\newblock \emph{Advances in neural information processing systems}, 34:\penalty0 480--493, 2021.

\bibitem[Ansari et~al.(2020)Ansari, Ang, and Soh]{ansari2020refining}
Ansari, A.~F., Ang, M.~L., and Soh, H.
\newblock Refining deep generative models via discriminator gradient flow.
\newblock \emph{arXiv preprint arXiv:2012.00780}, 2020.

\bibitem[Arjovsky et~al.(2017)Arjovsky, Chintala, and Bottou]{wgan}
Arjovsky, M., Chintala, S., and Bottou, L.
\newblock Wasserstein generative adversarial networks.
\newblock In \emph{International conference on machine learning}, pp.\  214--223. PMLR, 2017.

\bibitem[Balaji et~al.(2020)Balaji, Chellappa, and Feizi]{robust-ot}
Balaji, Y., Chellappa, R., and Feizi, S.
\newblock Robust optimal transport with applications in generative modeling and domain adaptation.
\newblock \emph{Advances in Neural Information Processing Systems}, 33:\penalty0 12934--12944, 2020.

\bibitem[Bernton(2018)]{bernton2018langevin}
Bernton, E.
\newblock Langevin monte carlo and jko splitting.
\newblock In \emph{Conference on learning theory}, pp.\  1777--1798. PMLR, 2018.

\bibitem[Bonet et~al.(2022)Bonet, Courty, Septier, and Drumetz]{SWGF}
Bonet, C., Courty, N., Septier, F., and Drumetz, L.
\newblock Efficient gradient flows in sliced-wasserstein space.
\newblock \emph{Transactions on Machine Learning Research}, 2022.
\newblock ISSN 2835-8856.

\bibitem[Bunne et~al.(2022)Bunne, Papaxanthos, Krause, and Cuturi]{population}
Bunne, C., Papaxanthos, L., Krause, A., and Cuturi, M.
\newblock Proximal optimal transport modeling of population dynamics.
\newblock In \emph{International Conference on Artificial Intelligence and Statistics}, pp.\  6511--6528. PMLR, 2022.

\bibitem[Carlier et~al.(2017)Carlier, Duval, Peyr{\'e}, and Schmitzer]{carlier2017convergence}
Carlier, G., Duval, V., Peyr{\'e}, G., and Schmitzer, B.
\newblock Convergence of entropic schemes for optimal transport and gradient flows.
\newblock \emph{SIAM Journal on Mathematical Analysis}, 49\penalty0 (2):\penalty0 1385--1418, 2017.

\bibitem[Carrillo et~al.(2022)Carrillo, Craig, Wang, and Wei]{carrillo2022primal}
Carrillo, J.~A., Craig, K., Wang, L., and Wei, C.
\newblock Primal dual methods for wasserstein gradient flows.
\newblock \emph{Foundations of Computational Mathematics}, pp.\  1--55, 2022.

\bibitem[Cheng et~al.(2023{\natexlab{a}})Cheng, Lu, Tan, and Xie]{nfjko2}
Cheng, X., Lu, J., Tan, Y., and Xie, Y.
\newblock Convergence of flow-based generative models via proximal gradient descent in wasserstein space.
\newblock \emph{arXiv preprint arXiv:2310.17582}, 2023{\natexlab{a}}.

\bibitem[Cheng et~al.(2023{\natexlab{b}})Cheng, Zhang, Yu, and Zhang]{particle}
Cheng, Z., Zhang, S., Yu, L., and Zhang, C.
\newblock Particle-based variational inference with generalized wasserstein gradient flow.
\newblock In \emph{Thirty-seventh Conference on Neural Information Processing Systems}, 2023{\natexlab{b}}.

\bibitem[Chewi et~al.(2020)Chewi, Le~Gouic, Lu, Maunu, and Rigollet]{svgdchi}
Chewi, S., Le~Gouic, T., Lu, C., Maunu, T., and Rigollet, P.
\newblock Svgd as a kernelized wasserstein gradient flow of the chi-squared divergence.
\newblock \emph{Advances in Neural Information Processing Systems}, 33:\penalty0 2098--2109, 2020.

\bibitem[Chizat et~al.(2018)Chizat, Peyr{\'e}, Schmitzer, and Vialard]{uot1}
Chizat, L., Peyr{\'e}, G., Schmitzer, B., and Vialard, F.-X.
\newblock Unbalanced optimal transport: Dynamic and kantorovich formulations.
\newblock \emph{Journal of Functional Analysis}, 274\penalty0 (11):\penalty0 3090--3123, 2018.

\bibitem[Choi et~al.(2023{\natexlab{a}})Choi, Choi, and Kang]{uotm}
Choi, J., Choi, J., and Kang, M.
\newblock Generative modeling through the semi-dual formulation of unbalanced optimal transport.
\newblock In \emph{Thirty-seventh Conference on Neural Information Processing Systems}, 2023{\natexlab{a}}.

\bibitem[Choi et~al.(2023{\natexlab{b}})Choi, Choi, and Kang]{uotmsd}
Choi, J., Choi, J., and Kang, M.
\newblock Analyzing and improving ot-based adversarial networks.
\newblock In \emph{The Twelfth International Conference on Learning Representations}, 2023{\natexlab{b}}.

\bibitem[Choi et~al.(2023{\natexlab{c}})Choi, Park, and Kang]{rgm}
Choi, J., Park, Y., and Kang, M.
\newblock Restoration based generative models.
\newblock In \emph{Proceedings of the 40th International Conference on Machine Learning}, volume 202. PMLR, 2023{\natexlab{c}}.

\bibitem[Dinh et~al.(2017)Dinh, Sohl{-}Dickstein, and Bengio]{dinh2016density}
Dinh, L., Sohl{-}Dickstein, J., and Bengio, S.
\newblock Density estimation using real {NVP}.
\newblock In \emph{5th International Conference on Learning Representations, {ICLR} 2017, Toulon, France, April 24-26, 2017, Conference Track Proceedings}. OpenReview.net, 2017.

\bibitem[Dockhorn et~al.(2022)Dockhorn, Vahdat, and Kreis]{dockhorn2021score}
Dockhorn, T., Vahdat, A., and Kreis, K.
\newblock Score-based generative modeling with critically-damped langevin diffusion.
\newblock \emph{The International Conference on Learning Representations}, 2022.

\bibitem[Du \& Mordatch(2019)Du and Mordatch]{implicitebm}
Du, Y. and Mordatch, I.
\newblock Implicit generation and modeling with energy based models.
\newblock \emph{Advances in Neural Information Processing Systems}, 32, 2019.

\bibitem[Esser et~al.(2021)Esser, Rombach, and Ommer]{esser2021taming}
Esser, P., Rombach, R., and Ommer, B.
\newblock Taming transformers for high-resolution image synthesis.
\newblock In \emph{Proceedings of the IEEE/CVF conference on computer vision and pattern recognition}, pp.\  12873--12883, 2021.

\bibitem[Fan et~al.(2022)Fan, Zhang, Taghvaei, and Chen]{vwgf}
Fan, J., Zhang, Q., Taghvaei, A., and Chen, Y.
\newblock Variational wasserstein gradient flow.
\newblock In \emph{proceedings of international conference on machine learning}, 2022.

\bibitem[Frogner \& Poggio(1806)Frogner and Poggio]{frogner1806approximate}
Frogner, C. and Poggio, T.
\newblock Approximate inference with wasserstein gradient flows (2018).
\newblock \emph{arXiv preprint arXiv:1806.04542}, 1806.

\bibitem[Gallou{\"e}t et~al.(2021)Gallou{\"e}t, Ghezzi, and Vialard]{semi-dual3}
Gallou{\"e}t, T., Ghezzi, R., and Vialard, F.-X.
\newblock Regularity theory and geometry of unbalanced optimal transport.
\newblock \emph{arXiv preprint arXiv:2112.11056}, 2021.

\bibitem[Gao et~al.(2021)Gao, Song, Poole, Wu, and Kingma]{recovery}
Gao, R., Song, Y., Poole, B., Wu, Y.~N., and Kingma, D.~P.
\newblock Learning energy-based models by diffusion recovery likelihood.
\newblock \emph{Advances in neural information processing systems}, 2021.

\bibitem[Gao et~al.(2019)Gao, Jiao, Wang, Wang, Yang, and Zhang]{jkoex2}
Gao, Y., Jiao, Y., Wang, Y., Wang, Y., Yang, C., and Zhang, S.
\newblock Deep generative learning via variational gradient flow.
\newblock In \emph{International Conference on Machine Learning}, pp.\  2093--2101. PMLR, 2019.

\bibitem[Glaser et~al.(2021)Glaser, Arbel, and Gretton]{kale}
Glaser, P., Arbel, M., and Gretton, A.
\newblock Kale flow: A relaxed kl gradient flow for probabilities with disjoint support.
\newblock \emph{Advances in Neural Information Processing Systems}, 34:\penalty0 8018--8031, 2021.

\bibitem[Gong et~al.(2019)Gong, Chang, Jiang, and Wang]{gong2019autogan}
Gong, X., Chang, S., Jiang, Y., and Wang, Z.
\newblock Autogan: Neural architecture search for generative adversarial networks.
\newblock In \emph{Proceedings of the IEEE/CVF International Conference on Computer Vision}, pp.\  3224--3234, 2019.

\bibitem[Goodfellow et~al.(2020)Goodfellow, Pouget-Abadie, Mirza, Xu, Warde-Farley, Ozair, Courville, and Bengio]{gan}
Goodfellow, I., Pouget-Abadie, J., Mirza, M., Xu, B., Warde-Farley, D., Ozair, S., Courville, A., and Bengio, Y.
\newblock Generative adversarial networks.
\newblock \emph{Communications of the ACM}, 63\penalty0 (11):\penalty0 139--144, 2020.

\bibitem[Gulrajani et~al.(2017)Gulrajani, Ahmed, Arjovsky, Dumoulin, and Courville]{wgan-gp}
Gulrajani, I., Ahmed, F., Arjovsky, M., Dumoulin, V., and Courville, A.~C.
\newblock Improved training of wasserstein gans.
\newblock \emph{Advances in neural information processing systems}, 30, 2017.

\bibitem[Heusel et~al.(2017)Heusel, Ramsauer, Unterthiner, Nessler, and Hochreiter]{fid}
Heusel, M., Ramsauer, H., Unterthiner, T., Nessler, B., and Hochreiter, S.
\newblock Gans trained by a two time-scale update rule converge to a local nash equilibrium.
\newblock \emph{Advances in neural information processing systems}, 30, 2017.

\bibitem[Ho et~al.(2020)Ho, Jain, and Abbeel]{ddpm}
Ho, J., Jain, A., and Abbeel, P.
\newblock Denoising diffusion probabilistic models.
\newblock \emph{Advances in Neural Information Processing Systems}, 33:\penalty0 6840--6851, 2020.

\bibitem[Jiang et~al.(2021)Jiang, Chang, and Wang]{jiang2021transgan}
Jiang, Y., Chang, S., and Wang, Z.
\newblock Transgan: Two transformers can make one strong gan.
\newblock \emph{arXiv preprint arXiv:2102.07074}, 1\penalty0 (3), 2021.

\bibitem[Jing et~al.(2022)Jing, Corso, Berlinghieri, and Jaakkola]{jing2022subspace}
Jing, B., Corso, G., Berlinghieri, R., and Jaakkola, T.
\newblock Subspace diffusion generative models.
\newblock \emph{arXiv preprint arXiv:2205.01490}, 2022.

\bibitem[Jordan et~al.(1998)Jordan, Kinderlehrer, and Otto]{jko}
Jordan, R., Kinderlehrer, D., and Otto, F.
\newblock The variational formulation of the fokker--planck equation.
\newblock \emph{SIAM journal on mathematical analysis}, 29\penalty0 (1):\penalty0 1--17, 1998.

\bibitem[Karras et~al.(2017)Karras, Aila, Laine, and Lehtinen]{karras2017progressive}
Karras, T., Aila, T., Laine, S., and Lehtinen, J.
\newblock Progressive growing of gans for improved quality, stability, and variation.
\newblock \emph{arXiv preprint arXiv:1710.10196}, 2017.

\bibitem[Karras et~al.(2020)Karras, Aittala, Hellsten, Laine, Lehtinen, and Aila]{karras2020training}
Karras, T., Aittala, M., Hellsten, J., Laine, S., Lehtinen, J., and Aila, T.
\newblock Training generative adversarial networks with limited data.
\newblock \emph{Advances in Neural Information Processing Systems}, 33:\penalty0 12104--12114, 2020.

\bibitem[Kim et~al.(2021)Kim, Shin, Song, Kang, and Moon]{kim2021score}
Kim, D., Shin, S., Song, K., Kang, W., and Moon, I.-C.
\newblock Score matching model for unbounded data score.
\newblock \emph{arXiv preprint arXiv:2106.05527}, 2021.

\bibitem[Kingma \& Dhariwal(2018)Kingma and Dhariwal]{kingma2018glow}
Kingma, D.~P. and Dhariwal, P.
\newblock Glow: Generative flow with invertible 1x1 convolutions.
\newblock \emph{Advances in neural information processing systems}, 31, 2018.

\bibitem[Kingma \& Welling(2014)Kingma and Welling]{kingma2013auto}
Kingma, D.~P. and Welling, M.
\newblock Auto-encoding variational bayes.
\newblock In Bengio, Y. and LeCun, Y. (eds.), \emph{2nd International Conference on Learning Representations, {ICLR} 2014, Banff, AB, Canada, April 14-16, 2014, Conference Track Proceedings}, 2014.

\bibitem[Krizhevsky et~al.(2009)Krizhevsky, Hinton, et~al.]{cifar10}
Krizhevsky, A., Hinton, G., et~al.
\newblock Learning multiple layers of features from tiny images.
\newblock 2009.

\bibitem[Lee et~al.(2023)Lee, Wang, and Li]{lee2023deep}
Lee, W., Wang, L., and Li, W.
\newblock Deep jko: time-implicit particle methods for general nonlinear gradient flows.
\newblock \emph{arXiv preprint arXiv:2311.06700}, 2023.

\bibitem[Liero et~al.(2018)Liero, Mielke, and Savar{\'e}]{uot2}
Liero, M., Mielke, A., and Savar{\'e}, G.
\newblock Optimal entropy-transport problems and a new hellinger--kantorovich distance between positive measures.
\newblock \emph{Inventiones mathematicae}, 211\penalty0 (3):\penalty0 969--1117, 2018.

\bibitem[Lin et~al.(2021)Lin, Li, Osher, and Mont{\'u}far]{lin2021wasserstein}
Lin, A.~T., Li, W., Osher, S., and Mont{\'u}far, G.
\newblock Wasserstein proximal of gans.
\newblock In \emph{International Conference on Geometric Science of Information}, pp.\  524--533. Springer, 2021.

\bibitem[Lipman et~al.(2023)Lipman, Chen, Ben-Hamu, Nickel, and Le]{lipman2022flow}
Lipman, Y., Chen, R. T.~Q., Ben-Hamu, H., Nickel, M., and Le, M.
\newblock Flow matching for generative modeling.
\newblock In \emph{The Eleventh International Conference on Learning Representations}, 2023.

\bibitem[Liu \& Wang(2016)Liu and Wang]{svgd}
Liu, Q. and Wang, D.
\newblock Stein variational gradient descent: A general purpose bayesian inference algorithm.
\newblock \emph{Advances in neural information processing systems}, 29, 2016.

\bibitem[Liu et~al.(2015)Liu, Luo, Wang, and Tang]{celeba}
Liu, Z., Luo, P., Wang, X., and Tang, X.
\newblock Deep learning face attributes in the wild.
\newblock In \emph{Proceedings of the IEEE international conference on computer vision}, pp.\  3730--3738, 2015.

\bibitem[Mescheder et~al.(2018)Mescheder, Geiger, and Nowozin]{which-converge}
Mescheder, L., Geiger, A., and Nowozin, S.
\newblock Which training methods for gans do actually converge?
\newblock In \emph{International conference on machine learning}, pp.\  3481--3490. PMLR, 2018.

\bibitem[Mokrov et~al.(2021)Mokrov, Korotin, Li, Genevay, Solomon, and Burnaev]{wgfkorotin}
Mokrov, P., Korotin, A., Li, L., Genevay, A., Solomon, J.~M., and Burnaev, E.
\newblock Large-scale wasserstein gradient flows.
\newblock \emph{Advances in Neural Information Processing Systems}, 34:\penalty0 15243--15256, 2021.

\bibitem[Park et~al.(2023)Park, Kim, Son, and Hwang]{park2023deep}
Park, M.~S., Kim, C., Son, H., and Hwang, H.~J.
\newblock The deep minimizing movement scheme.
\newblock \emph{Journal of Computational Physics}, 494:\penalty0 112518, 2023.

\bibitem[Parmar et~al.(2021)Parmar, Li, Lee, and Tu]{parmar2021dual}
Parmar, G., Li, D., Lee, K., and Tu, Z.
\newblock Dual contradistinctive generative autoencoder.
\newblock In \emph{Proceedings of the IEEE/CVF Conference on Computer Vision and Pattern Recognition}, pp.\  823--832, 2021.

\bibitem[Pidhorskyi et~al.(2020)Pidhorskyi, Adjeroh, and Doretto]{pidhorskyi2020adversarial}
Pidhorskyi, S., Adjeroh, D.~A., and Doretto, G.
\newblock Adversarial latent autoencoders.
\newblock In \emph{Proceedings of the IEEE/CVF Conference on Computer Vision and Pattern Recognition}, pp.\  14104--14113, 2020.

\bibitem[Roth et~al.(2017)Roth, Lucchi, Nowozin, and Hofmann]{r1_reg}
Roth, K., Lucchi, A., Nowozin, S., and Hofmann, T.
\newblock Stabilizing training of generative adversarial networks through regularization.
\newblock \emph{Advances in neural information processing systems}, 30, 2017.

\bibitem[Rout et~al.(2022)Rout, Korotin, and Burnaev]{otm}
Rout, L., Korotin, A., and Burnaev, E.
\newblock Generative modeling with optimal transport maps.
\newblock In \emph{International Conference on Learning Representations}, 2022.

\bibitem[Santambrogio(2017)]{Fillippo}
Santambrogio, F.
\newblock $\{$Euclidean, metric, and Wasserstein$\}$ gradient flows: an overview.
\newblock \emph{Bulletin of Mathematical Sciences}, 7:\penalty0 87--154, 2017.

\bibitem[S{\'e}journ{\'e} et~al.(2019)S{\'e}journ{\'e}, Feydy, Vialard, Trouv{\'e}, and Peyr{\'e}]{csiszar}
S{\'e}journ{\'e}, T., Feydy, J., Vialard, F.-X., Trouv{\'e}, A., and Peyr{\'e}, G.
\newblock Sinkhorn divergences for unbalanced optimal transport.
\newblock \emph{arXiv e-prints}, pp.\  arXiv--1910, 2019.

\bibitem[Song et~al.(2021{\natexlab{a}})Song, Meng, and Ermon]{ddim}
Song, J., Meng, C., and Ermon, S.
\newblock Denoising diffusion implicit models.
\newblock \emph{Advances in Neural Information Processing Systems}, 2021{\natexlab{a}}.

\bibitem[Song \& Ermon(2019)Song and Ermon]{song2019generative}
Song, Y. and Ermon, S.
\newblock Generative modeling by estimating gradients of the data distribution.
\newblock \emph{Advances in Neural Information Processing Systems}, 32, 2019.

\bibitem[Song et~al.(2021{\natexlab{b}})Song, Sohl-Dickstein, Kingma, Kumar, Ermon, and Poole]{scoresde}
Song, Y., Sohl-Dickstein, J., Kingma, D.~P., Kumar, A., Ermon, S., and Poole, B.
\newblock Score-based generative modeling through stochastic differential equations.
\newblock \emph{The International Conference on Learning Representations}, 2021{\natexlab{b}}.

\bibitem[Tong et~al.(2024)Tong, FATRAS, Malkin, Huguet, Zhang, Rector-Brooks, Wolf, and Bengio]{tong2024improving}
Tong, A., FATRAS, K., Malkin, N., Huguet, G., Zhang, Y., Rector-Brooks, J., Wolf, G., and Bengio, Y.
\newblock Improving and generalizing flow-based generative models with minibatch optimal transport.
\newblock \emph{Transactions on Machine Learning Research}, 2024.
\newblock ISSN 2835-8856.

\bibitem[Vacher \& Vialard(2023)Vacher and Vialard]{semi-dual1}
Vacher, A. and Vialard, F.-X.
\newblock Semi-dual unbalanced quadratic optimal transport: fast statistical rates and convergent algorithm.
\newblock In \emph{International Conference on Machine Learning}, pp.\  34734--34758. PMLR, 2023.

\bibitem[Vahdat \& Kautz(2020)Vahdat and Kautz]{vahdat2020nvae}
Vahdat, A. and Kautz, J.
\newblock Nvae: A deep hierarchical variational autoencoder.
\newblock \emph{Advances in Neural Information Processing Systems}, 33:\penalty0 19667--19679, 2020.

\bibitem[Vahdat et~al.(2021)Vahdat, Kreis, and Kautz]{vahdat2021score}
Vahdat, A., Kreis, K., and Kautz, J.
\newblock Score-based generative modeling in latent space.
\newblock \emph{Advances in Neural Information Processing Systems}, 34:\penalty0 11287--11302, 2021.

\bibitem[Van~Oord et~al.(2016)Van~Oord, Kalchbrenner, and Kavukcuoglu]{van2016pixel}
Van~Oord, A., Kalchbrenner, N., and Kavukcuoglu, K.
\newblock Pixel recurrent neural networks.
\newblock In \emph{International conference on machine learning}, pp.\  1747--1756. PMLR, 2016.

\bibitem[Vidal et~al.(2023)Vidal, Wu~Fung, Tenorio, Osher, and Nurbekyan]{vidal2023taming}
Vidal, A., Wu~Fung, S., Tenorio, L., Osher, S., and Nurbekyan, L.
\newblock Taming hyperparameter tuning in continuous normalizing flows using the jko scheme.
\newblock \emph{Scientific Reports}, 13\penalty0 (1):\penalty0 4501, 2023.

\bibitem[Villani et~al.(2009)]{villani}
Villani, C. et~al.
\newblock \emph{Optimal transport: old and new}, volume 338.
\newblock Springer, 2009.

\bibitem[Xiao et~al.(2020)Xiao, Kreis, Kautz, and Vahdat]{xiao2020vaebm}
Xiao, Z., Kreis, K., Kautz, J., and Vahdat, A.
\newblock Vaebm: A symbiosis between variational autoencoders and energy-based models.
\newblock \emph{arXiv preprint arXiv:2010.00654}, 2020.

\bibitem[Xiao et~al.(2021)Xiao, Kreis, and Vahdat]{xiao2021tackling}
Xiao, Z., Kreis, K., and Vahdat, A.
\newblock Tackling the generative learning trilemma with denoising diffusion gans.
\newblock \emph{arXiv preprint arXiv:2112.07804}, 2021.

\bibitem[Xu et~al.(2023)Xu, Cheng, and Xie]{nfjko}
Xu, C., Cheng, X., and Xie, Y.
\newblock Normalizing flow neural networks by {JKO} scheme.
\newblock In \emph{Thirty-seventh Conference on Neural Information Processing Systems}, 2023.

\bibitem[Zhang et~al.(2022)Zhang, Gu, Zhang, Bao, Chen, Wen, Wang, and Guo]{zhang2022styleswin}
Zhang, B., Gu, S., Zhang, B., Bao, J., Chen, D., Wen, F., Wang, Y., and Guo, B.
\newblock Styleswin: Transformer-based gan for high-resolution image generation.
\newblock In \emph{Proceedings of the IEEE/CVF Conference on Computer Vision and Pattern Recognition}, pp.\  11304--11314, 2022.

\bibitem[Zhao et~al.(2016)Zhao, Mathieu, and LeCun]{ebm}
Zhao, J., Mathieu, M., and LeCun, Y.
\newblock Energy-based generative adversarial networks.
\newblock In \emph{International Conference on Learning Representations}, 2016.

\bibitem[Zhu et~al.(2024{\natexlab{a}})Zhu, Wang, Zhang, Zhao, and Qian]{nsgf}
Zhu, H., Wang, F., Zhang, C., Zhao, H., and Qian, H.
\newblock Neural sinkhorn gradient flow.
\newblock \emph{arXiv preprint arXiv:2401.14069}, 2024{\natexlab{a}}.

\bibitem[Zhu et~al.(2024{\natexlab{b}})Zhu, Xie, Wu, and Gao]{coorp}
Zhu, Y., Xie, J., Wu, Y.~N., and Gao, R.
\newblock Learning energy-based models by cooperative diffusion recovery likelihood.
\newblock In \emph{The Twelfth International Conference on Learning Representations}, 2024{\natexlab{b}}.

\end{thebibliography}
\bibliographystyle{icml2024}

\clearpage
\appendix
\onecolumn
\section{Derivation \& Algorithm}
In this section, we provide a derivation of our optimization problem (Eq \ref{eq:reparametrize}) and provide the precise algorithms for our model.

\paragraph{Notations and Asssumptions}
Assume that all the distributions including $\mu$, $\mu_k$ and $\nu$ be absolutely continuous with respect to the Lebesgue measure.
Throughout the paper, we assume that every function $f$ under the subscript of divergence $D$, i.e. $D_f$, is a convex, differentiable, and nonnegative function defined on $\mathbb{R}^+$.
$f^*$ is a convex conjugate of $f$, i.e. $f^*(y) := \sup_{x} \left( \langle x,y\rangle - f(x) \right)$.
For convenience, we define $f^\circ (x) := - f^*(-x)$.
Moreover, we set $c_h(x,y) := \frac{1}{2h} \lVert x-y \rVert_2^2$.

\paragraph{Derivation of the Optimization Problem}
Before the derivation, we start with the following Lemma:
\begin{lemma} \cite{uot1, semi-dual1, semi-dual3, uotm} \label{lemma}
Consider the following optimization problem:
\begin{equation} \label{eq:uot2}
    \inf_{\pi \in \mathcal{M}_+} \int_{\mathcal{X}\times \mathcal{Y}} c_h(x,y) d\pi(x,y)  + D_{\varphi_1}(\pi_0|\mu_k) + D_{\varphi_2}(\pi_1|\nu).
\end{equation}
Then, the semi-dual formulation of Eq \ref{eq:uot2} is given as
\begin{equation} \label{eq:semi-dual-uot}
    \sup_{v \in \mathcal{C}} \int \varphi_1^\circ \left( v^c(x) \right) d\mu_k(x) + \int \varphi_2^\circ (v(y)) d\nu(y),
\end{equation}
where $v^c(x) = \inf_y \left[ c\left(x, y\right) - v(y) \right]$.
Moreover, the strong duality holds.
\end{lemma}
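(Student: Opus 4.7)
The plan is to derive the semi-dual form in two stages: first passing to the full Kantorovich-type dual of the UOT problem via Fenchel-Rockafellar duality, and then partially optimizing out one of the dual potentials to recover the semi-dual expression. The basic tool is the Fenchel duality identity for $f$-divergences,
\begin{equation*}
    D_{\varphi}(\alpha | \beta) = \sup_{g \in C_b} \left\{ \int g\, d\alpha - \int \varphi^*(g)\, d\beta \right\},
\end{equation*}
valid for convex, lower-semicontinuous $\varphi$ that are nonnegative on $\mathbb{R}^+$. Substituting this (with test functions $-u$ and $-v$, which is just a sign change) into both divergence terms of Eq.~\eqref{eq:uot2} turns the primal into a $\inf_\pi \sup_{u,v}$ problem whose inner integrand against $\pi$ is $c_h(x,y) - u(x) - v(y)$.

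Next, I would swap the infimum and supremum via Sion's minimax theorem (the set of nonnegative Radon measures is convex, and the integrand is affine in $\pi$ and concave in $(u,v)$ after the conjugates are applied). After swapping, the inner infimum over $\pi \in \mathcal{M}_+$ of a linear functional with kernel $c_h(x,y) - u(x) - v(y)$ is $0$ if the kernel is pointwise nonnegative and $-\infty$ otherwise. This enforces the Kantorovich-type constraint $u(x) + v(y) \le c_h(x,y)$ on the dual side. Using $\varphi^{\circ}(x) = -\varphi^*(-x)$, the resulting full dual takes the form
\begin{equation*}
    \sup_{\substack{u, v \in \mathcal{C} \\ u(x) + v(y) \le c_h(x,y)}} \int \varphi_1^{\circ}(u(x))\, d\mu_k(x) + \int \varphi_2^{\circ}(v(y))\, d\nu(y).
\end{equation*}

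To collapse this into the semi-dual, I would fix $v$ and maximize over $u$. Because $\varphi_1$ is convex and nonnegative on $\mathbb{R}^+$, the function $\varphi_1^{\circ}$ is nondecreasing, so the optimal pointwise choice saturates the constraint: $u(x) = \inf_y (c_h(x,y) - v(y)) = v^c(x)$. Substituting this back yields exactly the claimed expression in Eq.~\eqref{eq:semi-dual-uot}. The $c$-transform $v^c$ automatically satisfies the admissibility constraint, and monotonicity of $\varphi_1^{\circ}$ guarantees that this substitution loses no value.

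The main obstacle is justifying the minimax swap and the attainment needed for strong duality, since the primal is posed over the cone of nonnegative Radon measures on an unbounded domain $\mathbb{R}^d \times \mathbb{R}^d$. I would invoke a standard qualification argument: $c_h$ is continuous and coercive, $\varphi_1, \varphi_2$ are proper convex and finite on a neighborhood of $1$ (providing a Slater-type point via $\pi = \mu_k \otimes \nu$ after rescaling), and $\mu, \nu$ are probability measures so that the optimal $\pi$ has bounded total mass by \citet{semi-dual3}. Together these ensure the Fenchel-Rockafellar strong duality hypotheses hold, giving the claimed equality without a duality gap. I would then cite the established strong-duality results in \citet{uot1, semi-dual1, semi-dual3, uotm} rather than reproving the technical compactness arguments from scratch.
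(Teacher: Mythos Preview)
The paper does not supply its own proof of this lemma; it simply writes ``See \citet{uotm} for the proof'' and defers entirely to the cited references. Your proposal, by contrast, actually sketches the standard derivation: Fenchel duality for the $f$-divergence terms, passage to the full Kantorovich dual via the constraint $u(x)+v(y)\le c_h(x,y)$, and then elimination of $u$ using the monotonicity of $\varphi_1^{\circ}$ to land on $u=v^c$. This is precisely the route taken in the works the lemma cites, so your argument is correct and aligned with the intended proof, just made explicit where the paper is silent.

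One small remark: invoking Sion's minimax theorem for the inf-sup swap is not quite the right tool here, since $\mathcal{M}_+$ is an unbounded cone with no compactness in sight; you seem to recognize this yourself, because in the final paragraph you correctly pivot to Fenchel--Rockafellar duality with a Slater-type qualification. That is indeed how strong duality is established in \citet{uot1, semi-dual3}, so it would be cleaner to drop the reference to Sion and frame the swap directly as an instance of Fenchel--Rockafellar from the start.
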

\begin{proof}
    See \citet{uotm} for the proof.
\end{proof}
% \paragraph{Derivation}

This Lemma enables us to reformulate the JKO optimization problem into the semi-dual formulation of UOT.
Suppose $\mu = \mu_0$ and $\nu$ are source and target distributions, respectively, and let $\mathcal{F}(\rho) := D_f \left( \rho | \nu \right)$.
Then, our optimization problem Eq \ref{eq:jko} is as the follows: 
\begin{equation}
    \mu_{k+1} = \underset{\rho \in \mathcal{P}(\mathbb{R}^d)}{\text{argmin}} \underbrace{\frac{1}{2h} \mathcal{W}_2^2 (\mu_{k}, \rho) + D_f (\rho | \nu)}_{\mathcal{L}_\rho}.
\end{equation}
Recall the definition of 2-Wasserstein distance $\mathcal{W}_2$:
\begin{equation}
    \mathcal{W}_2^2 (\rho, \xi) := \min_{\pi \in \Pi(\rho, \xi)} \int_{\mathbb{R}^d \times \mathbb{R}^d} \lVert x-y \rVert_2^2 d\pi(x,y).
\end{equation}
By the definition of $\mathcal{W}_2$, the objective function $\mathcal{L}_\rho$ of Eq \ref{eq:jko-again} can be rewritten as the follows:
% to find a positive joint Radon measure $\pi \in \mathcal{M}_+$ such that minimizes $\int \frac{1}{2h}  \lVert x-y \rVert_2^2 d\pi(x,y) + D_f (\pi_1 | \nu).$ with the restriction $\pi_0 = \mu_k$.
% Formally, Eq \ref{eq:jko-again} can be reformulated as the follows:
\begin{equation} \label{eq:induce-semi-dual-jko2}
    \mathcal{L}_\rho = \min_{\pi \in \Pi(\mu_k, \rho)} \int_{\mathbb{R}^d\times \mathbb{R}^d} c_h (x,y) d\pi +D_f (\pi_1 | \nu).
\end{equation}
Then, by combining Eq \ref{eq:jko-again} and Eq \ref{eq:induce-semi-dual-jko2}, we obtain the following optimization problem:
\begin{equation} \label{eq:jko-uotm}
    \underset{\pi_0 = \mu_k}{\inf} \left[ \int_{\mathbb{R}^d\times \mathbb{R}^d} c_h (x,y) d\pi(x,y) +D_f (\pi_1 | \nu)   \right].
\end{equation}
By configuring $\varphi_1 = \iota$ and $\varphi_2 = f$ in Eq \ref{eq:uot2}, it boils down to Eq \ref{eq:jko-uotm}.
Thus, by applying Lemma \ref{lemma}, the semi-dual of Eq \ref{eq:jko-uotm} is written as follows:
\begin{equation} \label{eq:semi-dual-jko2}
    \sup_{v \in \mathcal{C}} \int v^c(x) d\mu_k(x) + \int f^\circ(v(y)) d\nu(y).
\end{equation}
Note that $v^c(x) := \inf_y ( c(x,y) - v(y) )$.
Equivalently, we can define $v^c(x) := \inf_{\Delta T} ( c(x,\Delta T(x)) - v(\Delta T(x)) ) $.
Thus, Eq \ref{eq:semi-dual-jko2} can be rewritten as follows:
\begin{equation} \label{eq:semi-dual-jko3}
    \sup_{v \in \mathcal{C}} \int \inf_{\Delta T} \left[ c\left(x, \Delta T(x)\right) - v(\Delta T(x)) \right] d\mu_k(x)
    + \int f^\circ(v(y)) d\nu(y).
\end{equation}

Now, suppose $T_k:\mathbb{R}^d \rightarrow \mathbb{R}^d$ be a measurable map such that ${T_k}_\# \mu = \mu_k$.
Then, by reparametrizing  $T := \Delta T \circ T_k $, Eq \ref{eq:semi-dual-jko3} is equivalent to
\begin{equation} \label{eq:reparametrize2}
    \sup_{v \in \mathcal{C}} \int \inf_{T} \left[ c\left(T_k (x),  T (x)\right) - v(T(x)) \right] d\mu(x)
    + \int f^\circ(v(y)) d\nu(y).
\end{equation}

\begin{algorithm}[t]
\caption{Training algorithm of S-JKO with KLD}
    \begin{algorithmic}[1]
    % \Require 
    \REQUIRE Transport network $T_\theta$ and the discriminator network $v_\phi$. 
    \STATE $T_{\text{old}} = \text{Id}$
    \FOR{$k = 0, 1, 2 , \dots, K$}
        \FOR{$i = 0, 1, 2 , \dots, N$}
            \STATE Sample a batch $x\sim \mu$, $y\sim \nu$, and $z\sim \mathcal{N}(\mathbf{0}, \mathbf{I})$.
            \STATE $\hat{y} = T_\theta(x, z)$.
            \STATE Update $\phi$ by using the loss $\mathcal{L}_v$.\\
            \vspace{-10pt}
            $$\mathcal{L}_v = v_\phi (\hat{y}) + f^*\left(-v_\phi (y)\right) + \lambda \lVert \nabla v_\phi(y) \rVert_2^2. $$
            \vspace{-15pt}
            \STATE Sample a batch $x\sim \mu$, and $z_1, z_2 \sim \mathcal{N}(\mathbf{0}, \mathbf{I})$.
            \STATE $\hat{y} = T_\theta(x,z_1)$, $\hat{y}_{\text{old}} = T_{\text{old}}(x,z_2)$.
            \STATE Update $\theta$ by using the loss $\mathcal{L}_T$. \\
            \vspace{-10pt}
            $$ \mathcal{L}_T = c \left(\hat{y}_{\text{old}}, \hat{y}\right) - v_\phi(\hat{y}) $$
            \vspace{-15pt}
        \ENDFOR
        \STATE $T_{\text{old}} \leftarrow T_\theta$
    \ENDFOR
    \end{algorithmic}
\label{alg:kld}
\end{algorithm}

\begin{algorithm}[t]
\caption{Training algorithm of S-JKO with JSD}
    \begin{algorithmic}[1]
    % \Require 
    \REQUIRE Transport network $T_\theta$ and the discriminator network $w_\phi$. 
    \STATE $T_{\text{old}} = \text{Id}$
    \FOR{$k = 0, 1, 2 , \dots, K$}
        \FOR{$i = 0, 1, 2 , \dots, N$}
            \STATE Sample a batch $x\sim \mu$, $y\sim \nu$, and $z\sim \mathcal{N}(\mathbf{0}, \mathbf{I})$.
            \STATE $\hat{y} = T_\theta(x, z)$.
            \STATE Update $\phi$ by using the loss $\mathcal{L}_v$.\\
            \vspace{-10pt}
            $$\mathcal{L}_v = S\left(w_\phi(\hat{y})\right) + S\left( - w_\phi(y) \right) + \lambda \lVert \nabla w_\phi(y) \rVert_2^2. $$
            \vspace{-15pt}
            \STATE Sample a batch $x\sim \mu$, and $z_1, z_2 \sim \mathcal{N}(\mathbf{0}, \mathbf{I})$.
            \STATE $\hat{y} = T_\theta(x,z_1)$, $\hat{y}_{\text{old}} = T_{\text{old}}(x,z_2)$.
            \STATE Update $\theta$ by using the loss $\mathcal{L}_T$. \\
            \vspace{-10pt}
            $$ \mathcal{L}_T = c\left(\hat{y}_{\text{old}}, \hat{y}\right) + S\left(- w_\phi(\hat{y})\right) $$
            \vspace{-15pt}
        \ENDFOR
        \STATE $T_{\text{old}} \leftarrow T_\theta$
    \ENDFOR
    \end{algorithmic}
\label{alg:jsd}
\end{algorithm}

\paragraph{Algorithm}
In image generation tasks, we slightly modify Algorithm \ref{alg:sjko} by following implementations in \citet{uotm, uotmsd}.
We demonstrate the precise training algorithm with KLD in Algorithm \ref{alg:kld}.
As shown in lines 4-5 and lines 7-8, we additionally plugged auxiliary variable $z$ into the network $T_\theta$.
This strategy is known to yield additional improvements in performance \cite{xiao2021tackling, rgm, uotm, uotmsd}.
Moreover, we incorporate $R_1$ regularizer \cite{r1_reg}, i.e. $\lambda \lVert \nabla v_\phi(y)\rVert_2^2$ in line 5, which is a popular regularization employed in various studies \cite{which-converge, xiao2021tackling, rgm, uotm, uotmsd}.
Furthermore, note that the cost function in line 9 is $c(x,y) = \frac{1}{2dh} \lVert x - y \rVert_2^2$.

Algorithm \ref{alg:jsd} demonstrates the exact algorithm for our model with JSD.
In image generation tasks on JSD, we also employ additional auxiliary variables and $R_1$ regularizer.
Suppose $D_f$ is JSD, then the convex conjugate of $f$ is
$$f^*(x) = \begin{cases}
-\log (2-e^x), & \mbox{if } x <\log 2, \\
\infty, & \mbox{if } x  \geq \log 2.
\end{cases}$$
Since the $f^*(v_\phi(y))$ is infinite whenever $v_\phi (y) \geq \log 2$, the reparametrization for $v_\phi$ is inevitable.
Thus, we introduce $w_\phi$, which is a reparametrization of $v_\phi$ as defined as follows:
\begin{equation}
    w_\phi (y) = (\sigma^{-1} \circ \exp) \left( v_\phi(y) - \log 2 \right),
\end{equation}
where $\sigma$ is a sigmoid function.
Then, the objective $\mathcal{L}_v$ and $\mathcal{L}_T$ in lines 5 and 9 in Algorithm \ref{alg:sjko} can be written as the follows:
\begin{align}
    \begin{split}
        \mathcal{L}_v &= S\left(w_\phi(\hat{y})\right) + S\left( - w_\phi(y) \right) \\
        \mathcal{L}_T &= c(\hat{y}_{\text{old}}, \hat{y}) - S\left(w_\phi(\hat{y})\right),    
    \end{split}
\end{align}
where $S(x)$ is a softplus function, i.e. $S(x):= \log (1+e^x)$.
However, technically, it is well-known that the gradient of the generator $T$ saturates when trained with $\mathcal{L}_T $ \cite{gan}.
Thus, by following the technical modification introduced in \citet{gan}, we modify the object as follows:
\begin{align}
    \begin{split}
        \mathcal{L}_v &= S\left(w_\phi(\hat{y})\right) + S\left( - w_\phi(y) \right) \\
        \mathcal{L}_T &= c(\hat{y}_{\text{old}}, \hat{y}) + S\left( - w_\phi(\hat{y})\right).    
    \end{split}
\end{align}
Finally, we obtain the training algorithm of S-JKO with JSD as the Algorithm \ref{alg:jsd}.

\section{Implementation Details} \label{appen:Implementation}
Unless otherwise stated, the source distribution $\mu$ is a $d$-dimensional standard Gaussian distribution and the target distribution $\nu$ is a data distribution.

\subsection{Synthetic data}
\paragraph{Two Circles} 
Suppose $P$ is a uniform distribution on the circles of radius 4 and 8. Then, we generate 2D ``Two Circle" data as follows:
$$ x + 0.2 z \quad x\sim P \quad z \sim \mathcal{N}(0, \mathbf{I}).$$

\paragraph{25-Gaussian Mixture}
Let $P$ be a uniform distribution on $\{ (3i, 3j) : i, j \in\{ -2,-1,0,1,2 \} \}$. Then, we generate 2D ``25-Gaussian Mixture" data as follows:
$$ x + 0.005 z \quad x\sim P \quad z \sim \mathcal{N}(0, \mathbf{I}). $$

\paragraph{Implementation Details}
For all synthetic experiments, we used the same architectures for the transport map $T_\theta$ and the potential network $v_\phi$.
We followed the architectures and hyperparameters of \citet{uotmsd} unless otherwise stated.
We used a batch size of 400 and a learning rate of $10^{-4}$ and $10^{-5}$ for the transport and potential networks, respectively.
We trained the networks for 100K iterations for UOTMs.
For our model, we trained for 20 JKO steps ($K=20$), and 5K iterations for each JKO step ($N=5000$).
Thus, our models are also trained for 100K iterations.
Moreover, we set $h=5$ for the 25-Gaussian Mixture, and $h=2$ for the Two Circles data.
We do not use any regularizations.

\subsection{Image Generation}
Otherwise stated, all the implementation details including preprocessing, hyperparameters, and architectures follow the implementation of \citet{uotmsd}.
The DCGAN model, which is written as \textit{Small} throughout the manuscript, follows the architecture employed in \citet{otm}.
For \textit{Large} model, we follow the implementation of \citet{uotm}.
For all implementations, We employ a batch size of 256, Adam optimizer with $(\beta_1, \beta_2) = (0.5, 0.9)$, and the learning rate of $2\times 10^{-4}$ and $10^{-4}$ for the $T_\theta$ and $v_\phi$ networks, respectively.
Moreover, we used $R_1$ regularization of $\lambda = 0.2$ for CIFAR-10 experiments, and $\lambda = 20$ for CelebA-256 experiments.
For the implementation of our model with KL divergence, we trained for 50 JKO steps ($K=50$), 10K iterations for the first JKO step, and 2K iterations for other JKO steps ($N=2000$). In total, we train for 110K iterations.
For the implementation of our model with Jensen-Shannon divergence, we trained for 35 JKO steps ($K=50$), 10K iterations for the first JKO step, and 2K iterations for other JKO steps ($N=2000$). In total, we train for 80K iterations.
For the implementation of ablation on $K$, we adjusted the number of iterations for each JKO step, i.e. $N$, to fix the total number of training iterations.

\paragraph{Number of network parameters}
In this paragraph, we compare the number of network parameters between comparison models (\cite{vwgf, nfjko}) to our S-JKO on the CIFAR-10 experiments. 
For \textit{Small} and \textit{Large} architecture, we use approximately 0.4M and 48M number of parameters for $T_\theta$, respectively.
\citet{vwgf} employs more than 30M parameters.
Moreover, since \citet{nfjko} use encoder-decoder networks, they can save the number of parameters to approximately 2-3M.

\paragraph{Evaluation Metric}
We used 50,000 generated samples to measure FID \cite{fid} scores.

\section{Additional Results}

\subsection{Training dynamics}
Through Fig \ref{fig:traj_KLD} and \ref{fig:traj_JSD}, we visualize the trajectories of S-JKO trained on CIFAR-10.
We sampled a batch $x\sim \mu$ and visualized $\{T_{5j+1} (x)\}$ for a non-negative integer $j$.

\subsection{Additional Qualitative Results}
Through Fig \ref{fig:appen_kld_cifar10_large}-\ref{fig:celeba_jsd}, we present generated samples for S-JKO trained on CIFAR-10 and CelebA-HQ ($256\times 256$).

\begin{figure}[t]
    \centering
    \includegraphics[width=.5\textwidth]{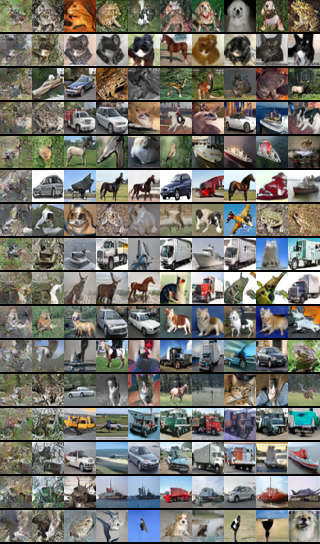}
    \caption{\textbf{CIFAR-10 trajectories from S-JKO (KLD)} for $K=5j+1$ ($0\leq j \leq 9$).}
    \label{fig:traj_KLD}
\end{figure}

\begin{figure}[t] 
    \centering
    \includegraphics[width=.5\textwidth]{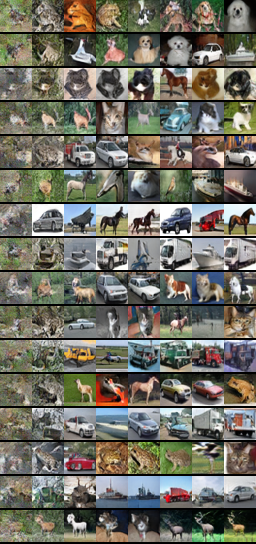}
    \caption{\textbf{CIFAR-10 trajectories from S-JKO (JSD)} for $K=5j+1$ ($0\leq j \leq 7$).}
    \label{fig:traj_JSD}
\end{figure}

\clearpage

\begin{figure}[t] 
    \centering
    \includegraphics[width=.86\textwidth]{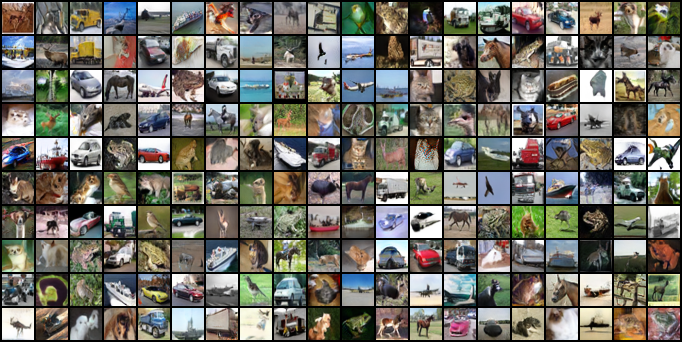}
    \caption{\textbf{Generated samples from S-JKO (KLD)} trained on CIFAR-10 ($32\times 32$) with \textit{Large} model.}
    \label{fig:appen_kld_cifar10_large}
\end{figure}

\begin{figure}[t] 
    \centering
    \includegraphics[width=.86\textwidth]{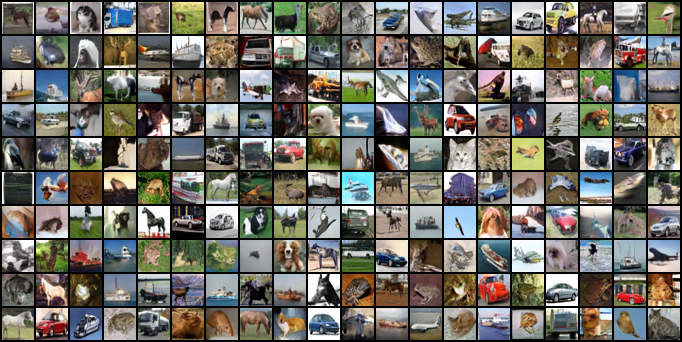}
    \caption{\textbf{Generated samples from S-JKO (JSD)} trained on CIFAR-10 ($32\times 32$) with \textit{Large} model.}
    \label{fig:appen_jsd_cifar10_large}
\end{figure}

\begin{figure}[t]
    \centering
    \includegraphics[width=.86\textwidth]{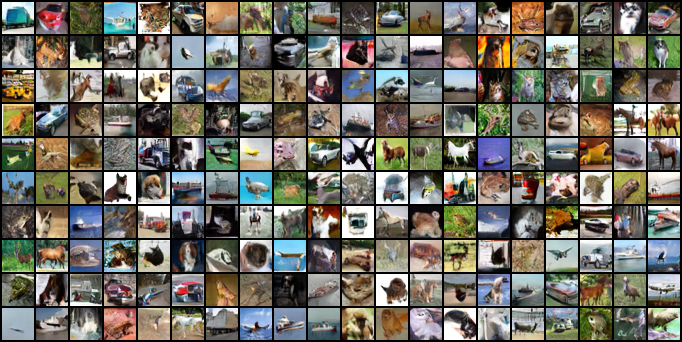}
    \caption{\textbf{Generated samples from S-JKO (KLD)} trained on CIFAR-10 ($32\times 32$) with \textit{Small} model.}
     \label{fig:appen_kld_cifar10_small}
\end{figure}

\begin{figure}[t] 
    \centering
    \includegraphics[width=.86\textwidth]{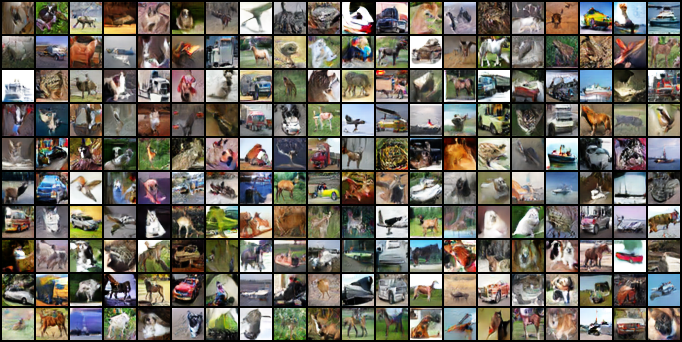}
    \caption{\textbf{Generated samples from S-JKO (JSD)} trained on CIFAR-10 ($32\times 32$) with \textit{Small} model.}
    \label{fig:appen_jsd_cifar10_small}
\end{figure}

\begin{figure}[t]
    \centering
    \includegraphics[width=.95\textwidth]{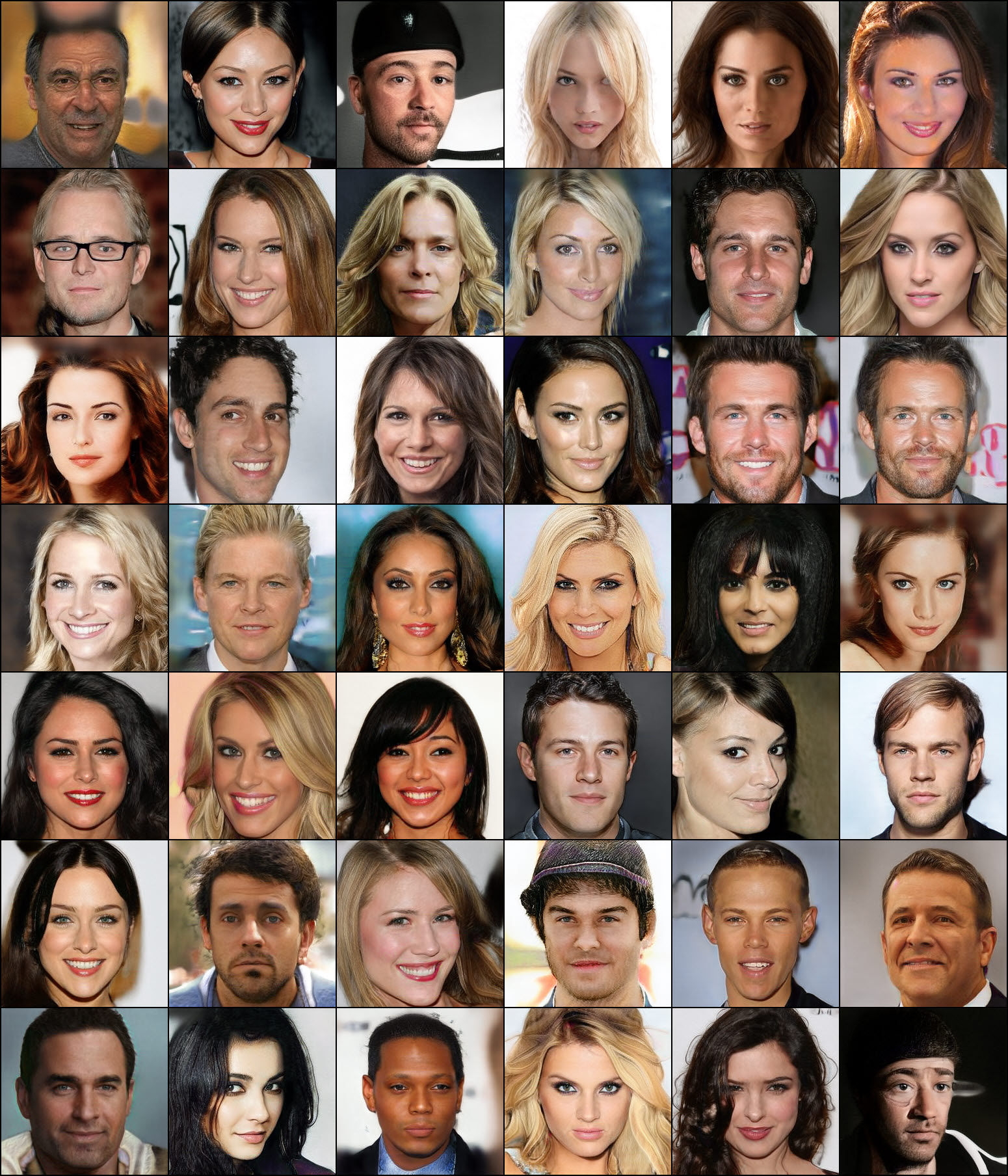}
    \caption{\textbf{Generated samples from S-JKO (KLD)} trained on CelebA-HQ ($256\times 256$).}
     \label{fig:celeba_kl}
\end{figure}

\begin{figure}[t] 
    \centering
    \includegraphics[width=.95\textwidth]{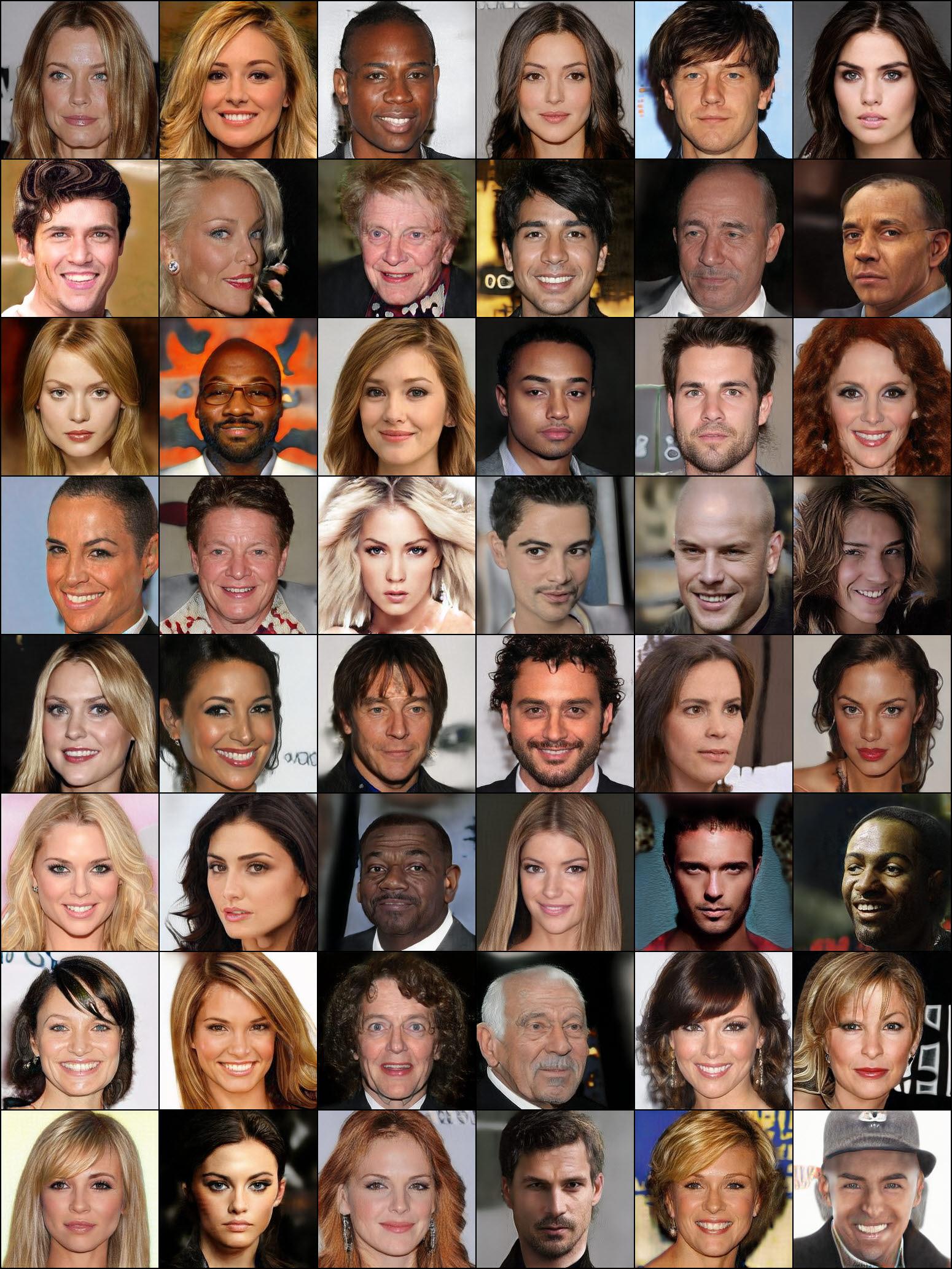}
    \caption{\textbf{Generated samples from S-JKO (JSD)} trained on CelebA-HQ ($256\times 256$).}
    \label{fig:celeba_jsd}
\end{figure}

\begin{figure*}[t]
    \centering
    \begin{minipage}{.8\linewidth}
        \centering
        % \vspace{-10pt}
        \setlength\tabcolsep{2.0pt}
        \renewcommand\thetable{2}
        \captionof{table}{
        %Results on an unconditional generation of CIFAR-10.
        %\textbf{Results on a CIFAR-10.}
        \textbf{Extensive Comparison with Diverse Generative Models on Image Generation on CIFAR-10.} $\dagger$ indicates the results conducted by ourselves.
        }
        \label{tab:compare-cifar10_full}
        % \vspace{-2pt}
        \scalebox{1}{
            \begin{tabular}{cccc}
            \toprule
            Class & Model &                FID ($\downarrow$)     \\ 
            \midrule
            \multirow{8}{*}{\textbf{GAN}} & SNGAN+DGflow \cite{ansari2020refining} &           9.62    \\
              & AutoGAN \cite{gong2019autogan} &              12.4    \\
              & TransGAN \cite{jiang2021transgan} &            9.26       \\
              & StyleGAN2 w/o ADA \cite{karras2020training} &   8.32    \\
              & StyleGAN2 w/ ADA \cite{karras2020training} &       2.92     \\
              &  DDGAN (T=1)\cite{xiao2021tackling}&     16.68  \\
              &  DDGAN \cite{xiao2021tackling}&     3.75   \\
              &    RGM \cite{rgm}             &     \textbf{2.47}   \\
            \midrule
            \multirow{8}{*}{\textbf{Diffusion}}
              &  NCSN \cite{song2019generative}&      25.3       \\
              &  DDPM \cite{ddpm}&                 3.21   \\
              &  Score SDE (VE) \cite{scoresde} &     2.20   \\
              &  Score SDE (VP) \cite{scoresde}&       2.41     \\
              &  DDIM (50 steps) \cite{ddim}&           4.67   \\
              &  CLD \cite{dockhorn2021score} &                   2.25    \\
              &  Subspace Diffusion \cite{jing2022subspace} &      2.17  \\
              &  LSGM \cite{vahdat2021score}&                \textbf{2.10}     \\
            \midrule
            \multirow{2}{*}{\textbf{Flow Matching}}
                  &    FM \cite{lipman2022flow}               &   6.35    \\
                  &    OT-CFM \cite{tong2024improving}     &   \textbf{3.74}     \\
            \midrule    
            \multirow{6}{*}{\textbf{VAE\&EBM}} 
              & NVAE \cite{vahdat2020nvae} &              23.5       \\
              & Glow \cite{kingma2018glow} &                48.9          \\
              & PixelCNN \cite{van2016pixel} &             65.9         \\
              & VAEBM \cite{xiao2020vaebm} &             12.2       \\
              & Recovery EBM \cite{recovery} &     9.58  \\ 
              & CDRL-large \cite{coorp} &     \textbf{3.68}  \\ 
            \midrule
            \multirow{5}{*}{\textbf{OT-based}}
              &    WGAN \cite{wgan}                       &   55.20     \\
              &    WGAN-GP\cite{wgan-gp}            &   39.40     \\
              % & Robust-OT \cite{robust-ot} & 21.57 & - \\
              % &    AE-OT-GAN \cite{ae-ot-gan}       &   17.10     \\
              &    OTM (\textit{Large})$^\dagger$                &   7.68    \\
              &    Source-fixed UOTM (\textit{Large})     &  7.53  \\
              &    UOTM (\textit{Large}) \cite{uotm}      &   \textbf{2.97}    \\
              \midrule \multirow{5}{*}{\textbf{WGF-based}}
              &     JKO-Flow \cite{vwgf}        &    23.1   \\
              &     JKO-iFlow \cite{nfjko}        &    29.1   \\
              &    NSGF \cite{nsgf} (\textit{Large})      &   5.55    \\
              &    \textbf{S-JKO} (\textit{Large})$^\dagger$      &   \textbf{2.62}      \\
              &    \textbf{S-JKO} (JSD) (\textit{Large})$^\dagger$      &   2.66     \\
            \bottomrule
            \end{tabular}
            }
    \end{minipage}
\end{figure*}

\end{document}